\def\BibTeX{{\rm B\kern-.05em{\sc i\kern-.025em b}\kern-.08em
    T\kern-.1667em\lower.7ex\hbox{E}\kern-.125emX}}
\newtheorem{theorem}{Theorem}
\newtheorem{lemma}{Lemma}
\newtheorem{definition}{Definition}
\newtheorem{remark}{Remark}
\newtheorem{proposition}{Proposition}
\newtheorem{theorem}{Theorem}
\newtheorem*{proof*}{Proof}
\newtheorem{proposition}{Proposition}
\newtheorem{lemma}{Lemma}
\newtheorem{definition}{Definition}
\definecolor{myblue}{RGB}{0,114,187}
\algrenewcommand\algorithmicrequire{\textbf{Initialization:}}
\algrenewcommand\algorithmicensure{\textbf{Output:}}
\title{Feature Matching Data Synthesis for  Non-IID Federated Learning 
}
\author{\IEEEauthorblockN{Zijian Li,~\IEEEmembership{Graduate Student Member, IEEE}}, \IEEEauthorblockN{Yuchang Sun,~\IEEEmembership{Graduate Student Member, IEEE}},\\
\IEEEauthorblockN{Jiawei Shao,~\IEEEmembership{Graduate Student Member, IEEE}},
\IEEEauthorblockN{Yuyi Mao,~\IEEEmembership{Member, IEEE}}, \\
\IEEEauthorblockN{Jessie Hui Wang,~\IEEEmembership{Member, IEEE}}, and
\IEEEauthorblockN{Jun Zhang,~\IEEEmembership{Fellow, IEEE}}
\thanks{Z. Li, Y. Sun, J. Shao, and J. Zhang are with the Department of Electronic and Computer Engineering, The Hong Kong University of Science and Technology,
Hong Kong, China (E-mail: \{zijian.li, yuchang.sun, jiawei.shao\}@connect.ust.hk, eejzhang@ust.hk).
Y. Mao is with the Department of Electrical and Electronic,
The Hong Kong Polytechnic University, Hong Kong, China (E-mail: yuyi-eie.mao@polyu.edu.hk). 
Jessie Hui Wang is with the Institute for Network Sciences and Cyberspace,
Tsinghua University, Beijing 100084, China, and also with ZGC Lab, Beijing
100194, China (e-mail: jessiewang@tsinghua.edu.cn).
(Corresponding author: Yuyi Mao.)}}
\begin{document}

% in the abstract or keywords.

\IEEEtitleabstractindextext{%
\begin{abstract}
% Federated learning (FL) has emerged as a privacy-preserving paradigm that collaboratively trains a global model on decentralized devices without collecting their raw data. 
Federated learning (FL) has emerged as a privacy-preserving paradigm that trains neural networks on edge devices without collecting data at a central server. 
% In comparison to centralized training, an inherent challenge of FL is the non-independent and identically distributed (non-IID) data among devices, which causes severe performance degradation.
However, FL encounters an inherent challenge in dealing with non-independent and identically distributed (non-IID) data among devices.
% , resulting in severe performance degradation.
% There have been many efforts utilizing data augmentation to tackle this non-IID issue in FL, but the challenges regarding accuracy, privacy leakage, and computational costs still exist.
To address this challenge, this paper proposes a hard feature matching data synthesis (HFMDS) method to share auxiliary data besides local models.
% Specifically, we initialize the synthetic sample with Gaussian noise and optimize them by matching the features of real data, with which the synthetic data can learn the essential task-relevant features and discard the redundant features in a short time, thus solving the non-IID problem while preserving data privacy. 
Specifically, synthetic data are generated by learning the essential class-relevant features of real samples and discarding the redundant features, which helps to effectively tackle the non-IID issue.
% , and the partial class-relevant features carried by them also reduce privacy leakage compared with the existing data synthesis methods.
For better privacy preservation, we propose a hard feature augmentation method to transfer real features towards the decision boundary, with which the synthetic data not only improve the model generalization but also erase the information of real features.
By integrating the proposed HFMDS method with FL, we present a novel FL framework with data augmentation to relieve data heterogeneity.
The theoretical analysis highlights the effectiveness of our proposed data synthesis method in solving the non-IID challenge. 
Simulation results further demonstrate that our proposed HFMDS-FL algorithm outperforms the baselines in terms of accuracy, privacy preservation, and computational cost on various benchmark datasets.
\end{abstract}

 \begin{IEEEkeywords}
 Federated learning (FL), non-independent and identically distributed (non-IID) data, data augmentation, edge intelligence.
 \end{IEEEkeywords}
 }
\maketitle

% Computer Society journal (but not conference!) papers do something unusual
% with the very first section heading (almost always called "Introduction").
% They place it ABOVE the main text! IEEEtran.cls does not automatically do
% this for you, but you can achieve this effect with the provided
% \IEEEraisesectionheading{} command. Note the need to keep any \label that
% is to refer to the section immediately after \section in the above as
% \IEEEraisesectionheading puts \section within a raised box.

% The very first letter is a 2 line initial drop letter followed
% by the rest of the first word in caps (small caps for compsoc).
% 
% form to use if the first word consists of a single letter:
% \IEEEPARstart{A}{demo} file is ....
% 
% form to use if you need the single drop letter followed by
% normal text (unknown if ever used by the IEEE):
% \IEEEPARstart{A}{}demo file is ....
% 
% Some journals put the first two words in caps:
% \IEEEPARstart{T}{his demo} file is ....
% 
% Here we have the typical use of a "T" for an initial drop letter
% and "HIS" in caps to complete the first word.
\IEEEraisesectionheading{\section{Introduction}\label{sec:introduction}}
\IEEEPARstart{T}{he}
% \section{Introduction}
development of deep learning (DL) has revolutionized the field of computer vision, enabling the creation of more accurate and sophisticated AI applications in edge devices, such as autonomous driving \cite{auto_driving}, virtual reality (VR) services \cite{9741351}, and unmanned aerial vehicle control \cite{qi2022task}.
However, the success of DL depends on the quantity and quality of training data, and the collection and analysis of data pose a severe risk to individual privacy.
% The collection and analysis of data are crucial components of many modern applications, from personalized recommendations to medical research. However, data privacy concerns and regulatory restrictions can make it difficult for organizations to share and centralize data for analysis.
To prevent the privacy risks associated with data collection and analysis, federated learning (FL) has recently emerged as a promising solution that exploits distributed data and computational resources to collaboratively train a global DL model \cite{federated_survey_1}.
In particular, instead of collecting the raw data as in centralized machine learning, FL algorithms, such as federated averaging (FedAvg) \cite{fl}, preserve data privacy by allowing clients to independently update their local models, which are then aggregated by a trusted server through multiple training rounds.

Depending on the types of clients, FL systems can be classified into the cross-device and cross-silo settings \cite{federated_survey_2}.
In this work, we focus on the cross-device FL, where clients are typically edge devices (e.g., mobile phones and healthcare gadgets) with limited data and computational resources.
In cross-device FL systems, a grand challenge is the vast heterogeneity of data distributions among devices, namely the non-independent and identically distributed (non-IID) issue.
The existence of non-IID data is attributed by various factors such as different user behaviors, uneven data collection, or varying environmental conditions.
The negative impacts of the non-IID issue in FL have been well explored through theoretical analysis \cite{fl_noniid, scaffold,federated_survey_2} and empirical experiments \cite{fl_experiments}.
With a distinct data distribution gap between clients, their local updates adversely become diverged after local training, thereby resulting in a misguided global model and accuracy degradation.

% which can easily lead to severe local model bias and thereby degrade the global model performance \cite{federated_survey_2,federated_survey_3}.

\begin{figure}
    \centering
    \includegraphics[width = 0.48\textwidth]{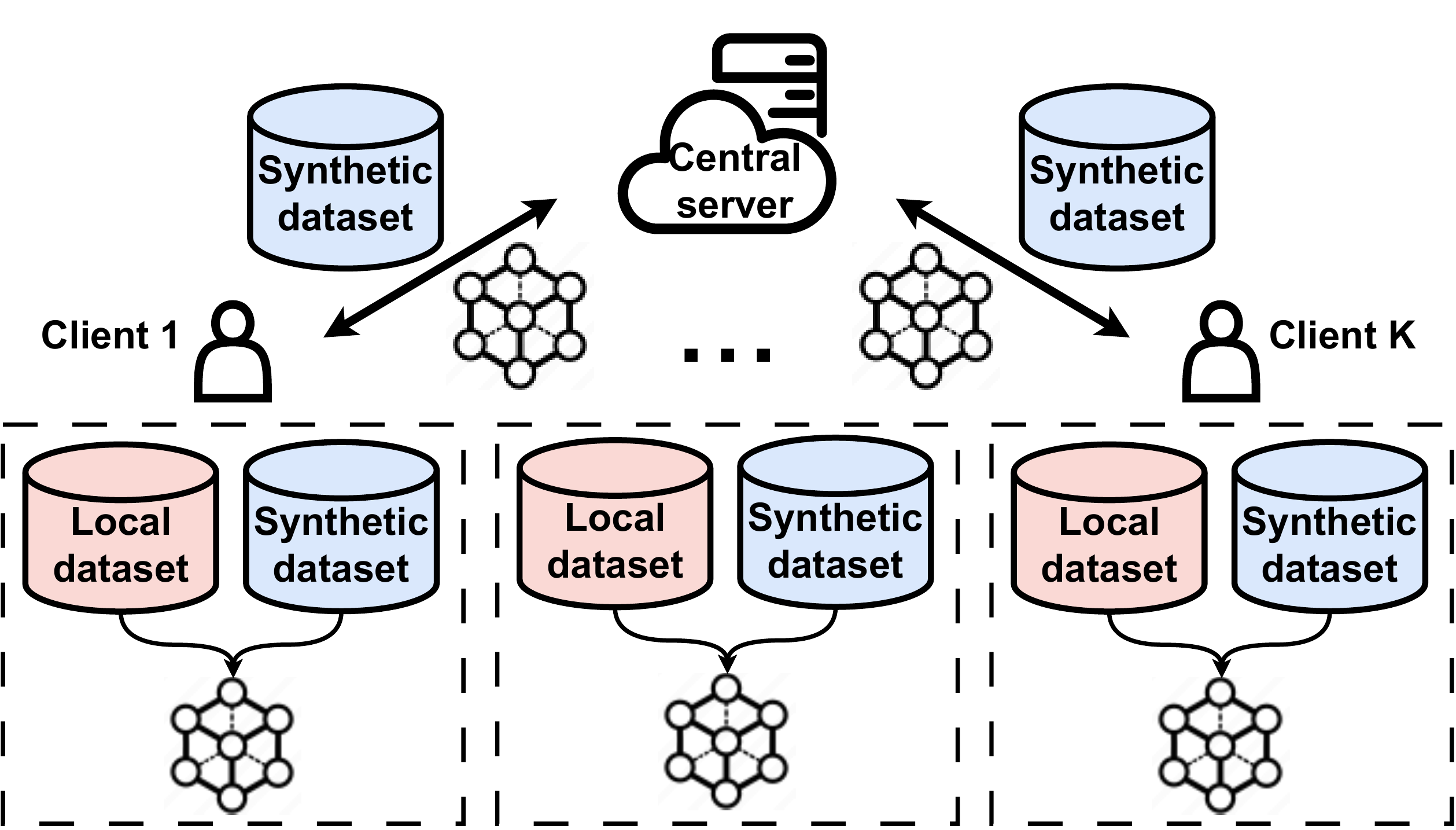}
    \caption{Overview of federated learning with data augmentation. Clients generate and share synthetic data with the central server for the global synthetic dataset constitution. Then the global synthetic dataset is sent back to clients for assisting the local training.}
    \label{fig:syn_framework}
\end{figure}

% A grand challenge of FL is the vast heterogeneity of data distribution among clients, namely the non-independent and identically distributed (non-IID) issue, which can easily lead to severe local model bias and thereby degrade the global model performance \cite{federated_survey_2,federated_survey_3}.
% In this work, we focus on the cross-device FL, where clients are usually edge devices (e.g., smartphones and wearables) with limited data and computational resources.
To mitigate data heterogeneity in cross-device FL, many methods propose to modify the local training or global aggregation phase under the FedAvg framework \cite{ma2022state}.
By leveraging the knowledge of the global model during the local training, the bias of local updates can be alleviated and thus the global model can converge along an unbiased direction \cite{fedprox, moon,FedDyn}.
% FedProx \cite{fedprox} restricts the local 
Besides, simply averaging the biased local updates at the server may lead to an imprecise global model.
Therefore, adjusting the aggregation weights for different local models or finetuning the global model via the knowledge of local models \cite{fednova, fedgen,fedftg} can also relieve the non-IID problem.
These methods, however, focus on the algorithmic aspect and only achieve marginal performance gains, especially in the extreme non-IID cases where each client just has a few classes of data \cite{fl_experiments}. 

To fundamentally solve the non-IID issue, another stream of study mitigates the data distribution gap between clients via synthetic data augmentation \cite{fl_noniid,ma2022state}. 
Fig. \ref{fig:syn_framework} shows a general FL framework with synthetic data augmentation, where clients generate and share synthetic data with each other to relieve the data heterogeneity among clients.
% Two types of synthetic data are often adopted in this FL framework: mixup data \cite{mixup} and generative data \cite{gan,vae}.
Following this framework, Zhang et al. \cite{fedmix} utilized mixup data \cite{mixup} as the synthetic data to assist local training.
% The mixup data \cite{mixup} is often adopted as the synthetic data to solve the non-IID issue in this FL framework.
% There are three challenges when embracing data augmentation in cross-device FL systems: \textit{data quality}, \textit{privacy protection}, and \textit{computational saving}.
By averaging multiple samples and their corresponding labels, the mixup data become blurry and better preserve privacy compared with raw data.
% the clients can use them to replace the real data to be shared. 
However, since the mapping from the input space to the label space is non-linear, naively averaging images and labels leads to incorrect data pairs \cite{manifold_mixup}.
For example, the mixup data of digit `6' and digit `9' looks like digit `8', but with a label of fifty-fifty probabilities for `6' and `9', respectively.
Consequently, the data mixup method only achieves limited improvement in model performance. 
Besides, the mixup data contains rich information of real data in the input space, which exposes data privacy.

Alternatively, generative models such as generative adversarial networks (GANs) \cite{gan} and variational autoencoders (VAEs) \cite{vae} can be also used to generate synthetic data. 
% On the other hand, the generative data are mainly achieved by the generative models such as generative adversarial networks (GANs) \cite{gan} and variational autoencoders (VAEs) \cite{vae}. 
Despite the incredible ability of generative models in centralized training, they suffer from the local overfitting problem in FL and require an extremely long training time when being trained with non-IID data. 
These challenges make it hard to be deployed in cross-device FL systems with limited data and scarce computational resources.
% Despite the incredible ability of generative models in data synthesis, they still suffer from an unstable training problem and require an extremely long training time, which is not compatible with cross-device FL scenarios.
% In addition, since the generative models are collaboratively trained under the FL framework, they also suffer from the local overfitting problem in the non-IID scenarios.
In addition, these generative models are optimized by learning the real data distribution, which promotes the synthetic data to be similar to real data, thus leading to severe privacy leakage.

This paper is dedicated to exploring the means of solving the non-IID issue in FL.
Specifically, we propose a novel data synthesis method based on model inversion techniques \cite{model_inversion_1}, which are able to overcome the aforementioned obstacles of existing data synthesis methods for FL.
By integrating the proposed data synthesis method with FL, we reduce the data heterogeneity among clients and improve model accuracy.
The main contributions of this paper are summarized as follows:

\begin{itemize}
    % \item We investigate the existing data synthesis methods for mitigating the non-IID issue in FL and illustrate their bottlenecks in terms of accuracy, privacy preservation, and computational costs.
    \item We propose a class-relevant feature matching data synthesis (\textit{FMDS}) method, which initializes the synthetic data with Gaussian noise and optimizes them by matching their class-relevant features with those of real data.
    This training paradigm in the input space instead of the model space significantly reduces the training overhead for data synthesis.
    By learning the class-relevant features indicated by the class activation map (CAM) and discarding the redundant features, FMDS can generate effective synthetic data for addressing the non-IID issue while preserving the privacy of raw data.
    \item { To further strengthen privacy protection and improve the effectiveness of synthetic data, we propose a hard feature-matching data synthesis (\textit{HFMDS}) method by promoting the real feature towards the decision boundary of different classes.
    The synthetic samples optimized by matching with the hard features can lead to a more precise decision boundary and thus improve the accuracy performance.
    The hard feature augmentation also eliminates more information about real features, thus further reducing privacy leakage.}
    \item By integrating the proposed HFMDS method with FL, we propose a novel FL algorithm with synthetic data augmentation (\textit{HFMDS-FL}), where each client can generate the hard-feature-matching synthetic data and share them with other clients, thus mitigating the data heterogeneity among clients.
    % \item To illustrate the effectiveness of the synthetic data generated by FMDS and how it solves the non-IID issue, 
    \item We present the visualization results and theoretical analysis to illustrate the effectiveness of HFMDS-FL in feature alignment and domain adaptation, resulting in solving the non-IID issue.
    We further verify this via simulations on benchmark datasets, and it is observed that our proposed HFMDS-FL framework consistently outperforms baselines regarding accuracy, privacy preservation, and computational cost saving.
\end{itemize}

\textbf{Organizations:} The rest of this paper is organized as follows.
In Section \ref{related works}, we review the related works on the non-IID problem and synthetic data augmentation in FL.
Section \ref{problem settings} describes the non-IID problem in FL and introduces a data synthesis-based solution and its corresponding challenges.
% We propose a class-relevant feature matching data synthesis (FMDS) method and further a hard feature matching data synthesis (HFMDS) method in Section 4.
% In Section 5, we incorporate the proposed data synthesis method HFMDS into FL.
We propose a hard feature matching data synthesis (HFMDS) method and integrate it with FL in Section \ref{method}.
% We incorporate the proposed data synthesis methods into FL in Section 5.
We analyze the effectiveness of our proposed algorithm in Section \ref{analysis} and then evaluate it via extensive simulations in Section \ref{experiments}.
Finally, we conclude the paper in Section \ref{conclusions}.

\section{Related Works}
\label{related works}
\textbf{Non-IID problem in FL:} 
The non-IID challenge of FL was first presented by \cite{fl}, which was later shown to significantly affect the convergence and performance of the global model \cite{fl_noniid,non_iid_convergence}. 
To overcome this issue, many works, known as the client-centric methods, attempted to reformulate the local training objective by leveraging knowledge of the global model and local models from other clients \cite{feature_alignment}.
FedProx \cite{fedprox} included a proximal term by restraining the local update using the global model.
SCAFFOLD \cite{scaffold} introduced the control variates to correct for the local drift during local training.
FedDyn \cite{FedDyn} proposed a dynamic regularizer for each client to parallelize the gradients among clients.
MOON \cite{moon} utilized contrastive learning to reduce the distance between model representations to correct the local training.
These methods, however, cannot solve the essence of the non-IID problem and may encounter the performance bottleneck in extreme cases with highly skewed data distributions \cite{fl_experiments}.

{Besides the local updates at clients, the server can also help to alleviate the negative impacts of non-IID data by calibrating the biased global model after aggregation. 
In CCVR \cite{classifier_calibration}, the server rectifies the classifier with virtual representations sampled from an approximated Gaussian mixture model.
FedFTG \cite{fedftg} calibrated the global model using data-free knowledge distillation with the knowledge of local models.}
Moreover, client clustering \cite{cluster_1,cluster_3} and client selection \cite{client_selection,client_selection_2} can also be conducted by the server to relieve the non-IID problem.
IFCA \cite{cluster_1} alternately estimated the cluster identities of clients via local empirical loss and updated the model parameters for each client cluster through gradient descent.
Wang et al. \cite{client_selection_2} proposed an algorithm that the server adaptively selects a subset of clients to maximize a reward that encourages the increase of validation accuracy under non-IID scenarios.
% Clustering the local models according to their similarity and intelligently selecting a subset of clients to participate in training can alleviate the bias of global model after aggregation.
Our proposed algorithm is orthogonal to these methods and provides a new perspective to tackle the non-IID issue in FL.

\textbf{Data augmentation for non-IID challenge:} 
% In addition to the above methods for the FL non-IID problem
Recently, data augmentation techniques, including data manipulation and DL-based data synthesis, have been extensively investigated for generating synthetic data to be shared among clients in FL \cite{what2share}.
For data manipulation, a straightforward method is to allow clients to share a small portion of real data \cite{fl_noniid,data_share_2}, which, however, is contradictory to
the privacy-preserving principle of FL.
To relieve such a risk, some early attempts proposed to share the mixup data \cite{mixup}, which is obtained by averaging the real samples, to mitigate the data heterogeneity in FL systems.
Mix2FLD \cite{mix2fld} allowed clients to share mixup samples with the server and adopted an inverse engineer to generate inversely-mixup samples for federated distillation.
XorMixFL \cite{xormixup} let the server collect the processed data encoded using bit-wise XOR mixup and decode them for the centralized training. 
In FedMix \cite{fedmix}, clients share the mixup data with each other and update the local models using second-order optimization.
However, the unqualified mixup data caused by the mismatching between data and labels only achieve a limited performance improvement for FL, and the manipulation in the input space still severely leaks the privacy of real data.
% there is a tradeoff between privacy leakage of the synthetic data and the FL performance, e.g., using fewer data samples for mixing brings better privacy guarantee but reduces the effectiveness in classification.

The DL-based data synthesis methods conduct data synthesis by updating the generative models or optimizing the synthetic data directly.
A straightforward method is to train a qualified GAN \cite{gan} that can be shared among clients to generate synthetic data and features  \cite{fedgan,feddpgan,sda_fl,fedgen}.
FedGAN \cite{fedgan} and FedDPGAN \cite{feddpgan} allowed clients to collaboratively train a global generator under the FL framework and used it to generate synthetic data for mitigating data heterogeneity among clients.
In SDA-FL \cite{sda_fl}, clients individually train a local generator and share them with each other.
Although the GAN-based methods are promising in solving the non-IID problem, the high computational resource demands for training the generator and discriminator limit their applicability in practical systems.
Besides, VHL \cite{VHL} instituted a virtual homogeneous dataset to calibrate the features from the heterogeneous clients, which can be generated from pure noise shared across clients.
Moreover, zero-shot learning was also applied for synthetic data generation to promote the fairness of FL \cite{zero_shot_fl}, but only knowing the global model is incapable of generating synthetic data with sufficient quality.

This paper follows the DL-based data synthesis research line and proposes to learn the class-relevant features of real data, which can better solve the non-IID issue and preserve data privacy compared with the existing data augmentation methods in FL. 

\section{Problem Settings}
\label{problem settings}
\subsection{Non-IID Problem in FL}
We consider a typical FL setting for the general problem of $Y$-class classification. 
In the FL setting with $K$ clients, each client $k\in[K]$ has its local dataset $\hat{\mathcal{D}}_k$. The goal of FL is to train a global model that minimizes the global population loss as follows:
\begin{align}
    \mathcal{L}_g (\bm{w})= \mathbb{E}_{(\bm{x},y)\in \hat{\mathcal{D}}_g} \ell (f(\bm{w};\bm{x}), {y}),
\end{align}
 where $f: \mathcal{X} \rightarrow \mathcal{Y} $ is the function that maps the input data $\bm{x}$ to a predicted label $\hat{y}$ using the model $\bm{w}$, $\ell$ is a generic loss function, e.g., the cross-entropy loss, and ${\hat{\mathcal{D}}_g}=\cup_{k\in[K]} \hat{\mathcal{D}}_k $ is the global dataset of all the clients.
To achieve this goal, the global objective function for FL is given by:
\begin{align}
    \mathcal{L} (\bm{w}) = \frac{1}{K} \sum _{k=1 }^K \mathcal{L}_k(\bm{w}),
\end{align}
where $\mathcal{L}_k (\bm{w}) = \mathbb{E}_{(\bm{x},y)\in \hat{\mathcal{D}}_k} \ell ( f(\bm{w};\bm{x}), y)$ is the local objective function of client $k$.
% The objective function of client $k$ for local updating is defined by 
% \begin{align}
%     \mathcal{L} = \mathbb{E}_{(x,y) \in \mathcal{D}_k} \ell \left( f(w_k;x), y \right),
% \end{align}
% where $f(w;x)$ is the logits of the sample $x$ output by the local model $w_k$ and $\ell$ is the cross-entropy loss function, respectively.

To optimize the model $\bm{w}$, the classical FedAvg \cite{fl} allows clients to collaboratively train the global model in $T$ rounds.
In each communication round $t \in [T]$, the server randomly selects a subset of clients $\mathcal{K}_t$ from $K$ clients and sends the global model $\bm{w}_t$ to them.
As it is impractical to compute full gradients on the whole local dataset, each client $k \in \mathcal{K}_t$ updates the model locally with $\mathcal{T}_k$ local steps using the mini-batch SGD algorithm with the local dataset $\hat{\mathcal{D}}_k$.
Specifically, in each local step $\tau = 1, \cdots, \mathcal{T}_k$ of the $t$-th communication round, client $k$ randomly samples a batch of training data $\xi_{k}^{t,\tau}$ and updates local model $\bm{w}_{k}^{t,\tau-1}$ with the gradients $\nabla \ell (\bm{w}_k^{t, {\tau-1}}; \xi_k^{t,\tau})$ as follows:
\begin{align}
    \bm{w}_{k}^{t,\tau} \leftarrow \bm{w}_{k}^{t,\tau-1} - \eta\nabla \ell (\bm{w}_k^{t, {\tau-1}}; \xi_k^{t,\tau}),
\end{align}
where $\eta$ is the learning rate.
After the local training, each active client $k \in \mathcal{K}_t$ uploads its local model $\bm{w}_k^t$ to the server, all of which are aggregated to generate a new global model as follows:
\begin{align}
    \bm{w}^{t+1} = \frac{1}{|\mathcal{K}_t|} \sum_{k \in \mathcal{K}_t} \bm{w}^{t}_{k}, \quad \bm{w^}t_k \leftarrow \bm{w}_{k}^{t,\mathcal{T}_k}.
\end{align}
Afterwards, the new global model $\bm{w}^{t+1}$ is broadcast to the active clients for the next-round training.

If the data is IID in FL, the local gradient $\nabla \ell (\bm{w}_{k}^{t,\tau-1},\xi_{k}^{t,\tau} )$ at each client $k$ is close to the gradients of global population loss $\nabla \mathcal{L}_g (\bm{w})$.
The global model after aggregation thus achieves similar performance as centralized training.
However, when data is non-IID, for each client $k$, due to the distinct data distribution, the divergence between local gradients $\nabla \ell (\bm{w}_{k}^{t,\tau-1},\xi_{k}^{t,\tau} )$ and $\nabla \mathcal{L}_g (\bm{w})$ becomes larger and thereby leads to a biased global model after aggregation.
The bias of the global model is accumulated during the training process and thus degrades the convergence speed and model accuracy \cite{fl_noniid}.
% However, FedAvg, despite its success in many real-world applications, still suffers from accuracy and convergence degradation when the data distribution of clients is non-IID \cite{fl_noniid}.

\subsection{Data Synthesis for Non-IID Problem}
To bridge the data distribution gap between clients and reach an IID data distribution, one ideal solution is to share real samples among clients.
With the assistant of the real dataset $\hat{\mathcal{D}}_{-k}$ from other clients, the local objective function of client $k$ is formulated as follows:
\begin{align}
\label{ideal}
    &\mathcal{L}^*_k (\bm{w}_k ) \triangleq \alpha \mathbb{E}_{(\bm{x},y) \in \hat{\mathcal{D}}_k} \ell \left( f(\bm{w}_k;\bm{x}), y \right) \nonumber \\
    & \quad\quad\quad\quad\quad\quad + (1- \alpha) \mathbb{E}_{(\bm{x},y) \in \hat{\mathcal{D}}_{-k}} \ell \left( f(\bm{w}_k;\bm{x}), y \right),
\end{align}
where $\alpha$ is a hyperparameter weighting the loss terms.
With a proper value of $\alpha$, this local objective function enables the closer gradients to the gradients of global population loss $\nabla \mathcal{L}_g (\bm{w})$ and thus solve the non-IID issue.
% and $\hat{\mathcal{D}}_{g/k}$ is the global dataset that excludes the local dataset of client $k$, respectively.
However, this solution that requires real data sharing directly reveals the data privacy of clients.
To relieve the privacy leakage and solve the non-IID issue, instead of sharing the real data, a general FL framework is to share the synthetic data among clients.
% we propose to share the synthetic data that carries the essential features and information relevant to the classification task among clients
As shown in Fig. \ref{fig:syn_framework}, each client generates its own synthetic dataset and shares it to the server for the global synthetic dataset constitution, which is shared back to clients for local regularization.
% To relieve the privacy leakage and improve the utility, the synthetic data should carry the essential features that are related to the classification while omitting the redundant features to protect privacy.
With the shared synthetic dataset $\hat{\mathcal{D}}_{\text{syn}}$, the new local objective function that approximates \eqref{ideal} is given by:
\begin{align}
    &\hat{\mathcal{L}}^*_k (\bm{w}_k ) \triangleq \alpha \mathbb{E}_{(\bm{x},y) \in \hat{\mathcal{D}}_k} \ell \left( f(\bm{w}_k;\bm{x}), y \right)  \nonumber \\
    & \quad\quad\quad\quad\quad\quad+ (1- \alpha) \mathbb{E}_{(\hat{\bm{x}},y) \in \hat{\mathcal{D}}_{\text{syn}}} \ell \left( f(\bm{w}_k;\hat{\bm{x}}), y \right).
    \label{local update}
\end{align}

% \subsection{Challenges of Data Synthesis in FL}
% \label{challenge}
However, there are three key challenges of effective data synthesis in FL detailed as follows:
% conducting effective data synthesis methods in FL is non-trivial, and three challenges need to be considered as follows:  
% When considering the data synthesis for FL, there are three challenges as follows:
\begin{itemize}
    \item \textbf{Data quality:} The synthetic data quality significantly affects the convergence rate and accuracy for FL.
    Only the synthetic data carrying the task-relevant features can replace the real data to reduce the data heterogeneity among clients, and the invalid synthetic data contributes negatively to the training process and impairs the accuracy.
    For example, the mixup data used in FedMix \cite{fedmix} achieve limited performance improvement.
    This is because the vanilla averaging in the input space and label space results in the mismatching data pairs \cite{manifold_mixup}.

    \item \textbf{Privacy leakage:} 
    % Privacy preservation is one of the key problems in FL.
    To preserve privacy, the synthetic data should carry minimum information about the real images, i.e., the synthetic data should be dissimilar to the real data.
    However, since the mixup data used in FedMix are processed directly through the real data and the GAN-based synthetic data are generated by imitating the real data distribution, they look realistic and similar to the real data, thereby revealing the privacy of real data. 

    \item \textbf{Computational cost:} 
    % Computational cost is also a critical problem in cross-device FL systems where 
    Considering the limited computational resources at edge devices, the computation demands for data synthesis should be kept sufficiently low to ensure feasibility in cross-device FL systems.
    However, the GAN-based synthetic methods require extremely high computation costs for training the generators and discriminators, which is not compatible with the cross-device systems.
\end{itemize}
To verify the above discussion, we conduct a case study of FedMix \cite{fedmix}, FedGAN \cite{fedgan}, and our proposed method HFMDS-FL on CIFAR-10 dataset \cite{cifar10} in a cross-device FL system with 20 clients, and each client has 2 classes of data.
The results in Fig. \ref{fig:acc_psnr_cost} show that FedMix obtains an excessively high Peak Signal-to-Noise Ratio (PSNR) value and FedGAN requires extremely high computational costs.
% To overcome the above challenges and solve the non-IID issue, we propose a feature-matching data synthesis method and then a hard feature augmentation mechanism for better privacy protection in the next section.
In comparison, HFMDS-FL is able to overcome these challenges and achieves better performance in terms of accuracy, privacy preservation, and computational cost saving.
In the next section, we will introduce the HFMDS, which is then integrated with FL to develop HFMDS-FL.
% that is compatible with cross-device systems.
\begin{figure}
    \centering
    \includegraphics[width = 0.4\textwidth]{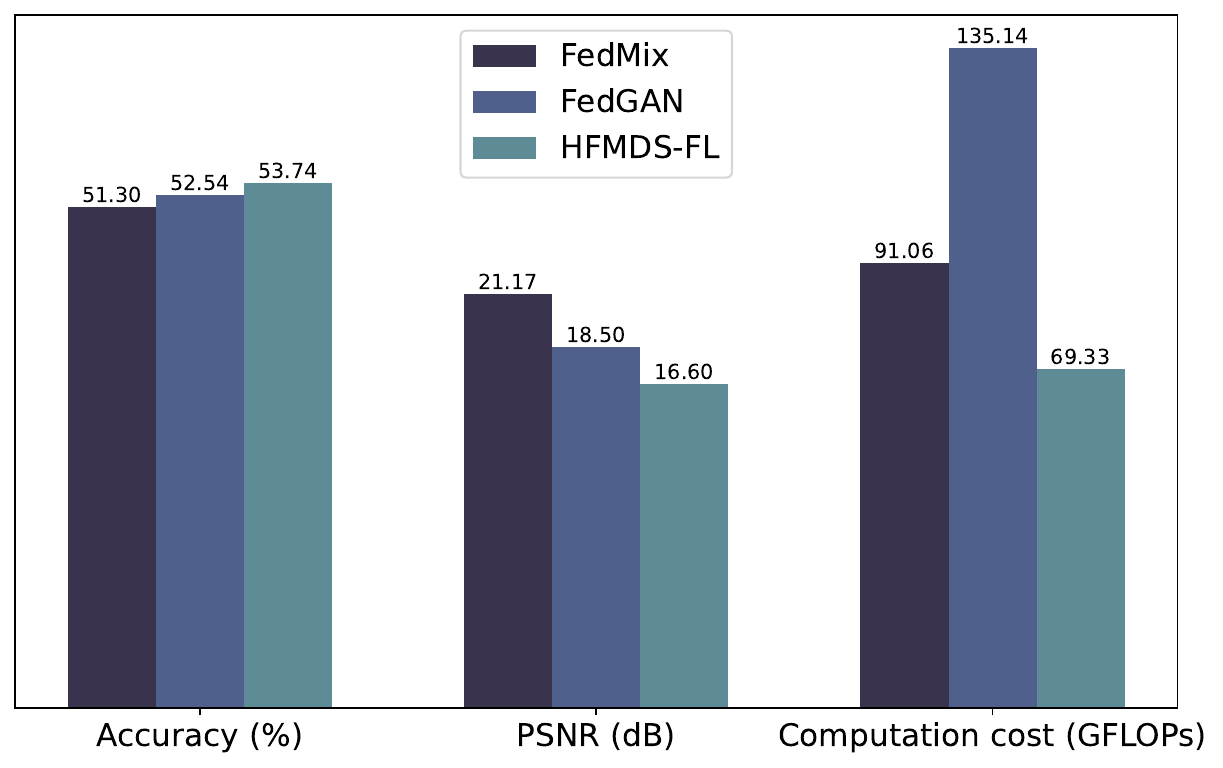}
    \caption{Comparison of FedMix, FedGAN, and our proposed algorithms (FMDS-FL and HFMDS-FL) in terms of accuracy, PSNR, and computational cost on CIFAR-10.}
    \label{fig:acc_psnr_cost}
\end{figure}

\section{Hard Feature Matching Data Synthesis for Federated Learning (HFMDS-FL)}
\label{method}
% As analyzed in Section \ref{challenge}, two main challenges for data synthesis in FL are the non-informative features carried in the synthetic samples and high training costs.
To overcome the above challenges, in this section, we first propose a novel data synthesis method using class-relevant feature matching based on model inversion techniques, and a hard feature augmentation mechanism is then proposed to provide stronger privacy protection while improving the utility for the synthetic data.
The overview of our proposed {HFMDS} method is shown in Fig. \ref{fig:framework}.
By incorporating the proposed HFMDS, we present HFMDS-FL  in Section \ref{HFMDS-FL}, which is a new FL framework with synthetic data augmentation to solve the non-IID issue.

\subsection{Model Inversion Data Synthesis}
To reduce the computational cost in data synthesis, we adopt the model inversion techniques \cite{model_inversion_1,model_inversion_2}, which optimizes the synthetic samples in the input space via the classification knowledge of real samples.
Compared with training high-dimensional GANs \cite{feddpgan,fedgan}, the synthetic samples are low-dimensional and easier to be trained via the inversion-based methods, thus saving valuable computational costs at clients.
% Compared with the high-dimensional generative model to be trained in GAN-based methods \cite{gan_1,gan_2}, the synthetic samples in the model inversion-based methods are low-dimensional and fast-training, thus saving computational cost for clients.
% Instead of training a generative model that imitates the domain distribution of local data to generate high-quality synthetic samples, the model inversion method directly transfers the knowledge of models to the synthetic images \citep{model_inversion_2}, thereby saving computational resources significantly. 
The model inversion optimization problem for the synthetic data $\hat{\bm{x}}$ is formulated as:
% Specifically, with the synthetic sample $\hat{\bm{x}}$ initialized using Gaussian noise and an arbitrary target label $y$, we update it with the help of the model $\bm{w}$ by solving the following optimization problem:
\begin{equation}
\label{model_inversion}
     \min_{\hat{\bm{x}}} \mathcal{L}_C + \mathcal{R} (\hat{\bm{x}}),
\end{equation}
where $\mathcal{L}_C \triangleq \ell \left( f(\bm{w};\hat{\bm{x}}), y \right)$ is the classification loss function with the corresponding target label $y$, and $\mathcal{R} (\hat{\bm{x}})$ is the image regularization term for data synthesis.
With the initialization of the synthetic sample $\hat{\bm{x}}$ using Gaussian noise and an arbitrary target label $y$, we can update the synthetic sample $\hat{\bm{x}}$ by using the gradient descent to minimize \eqref{model_inversion}.
As such, the synthetic sample can learn abundant knowledge from the model and raw data.

Existing works on data generation \cite{model_inversion_3,dreaming,imagine,always_dream,batch_gradients} and model inversion attacks \cite{gradient_inversion, idlg} design the regularization terms $\mathcal{R} (\hat{\bm{x}})$ by including the total variance and $\ell_2$ norm of $\hat{\bm{x}}$ to make synthetic samples more realistic. Different from these works, we design an objective function to promote synthetic samples to maintain only the essential features that are beneficial to the classification task and discard redundant features, which aims at achieving better privacy protection while ensuring satisfactory classification accuracy. 
% We first describe our proposed objective function for class-relevant feature matching in Section \ref{FMDS} and then describe the hard feature augmentation mechanism for better privacy preservation and accuracy improvement in Section \ref{AFMDS_}.
% our method instead generates unrealistic synthetic samples for better privacy protection while improving the classification performance.

\begin{figure*}
    \centering
    \includegraphics[width=\textwidth]{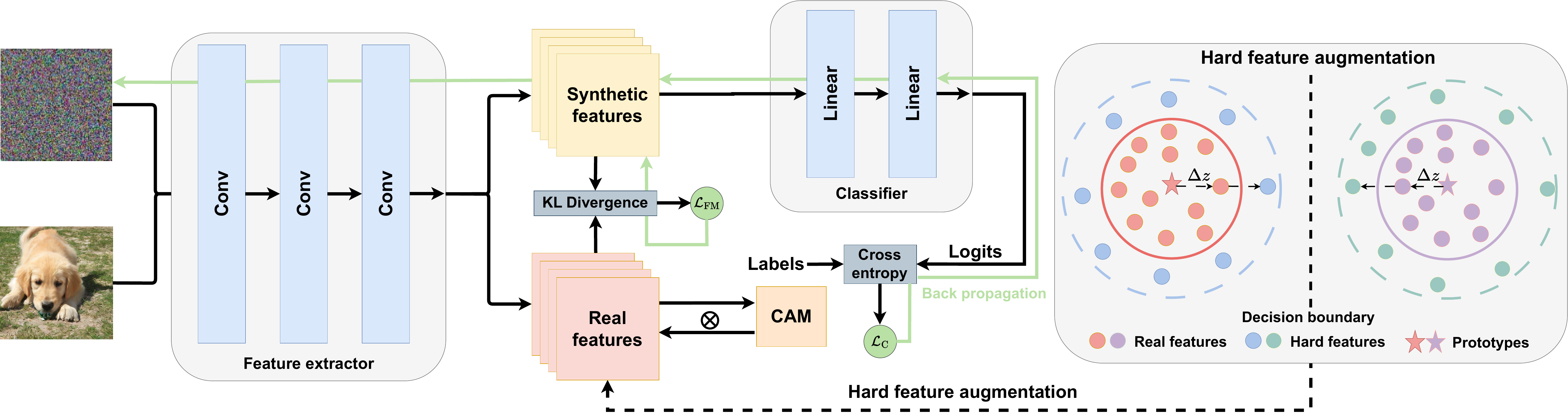}
    \caption{The overview of feature matching data synthesis.
    For FMDS, we compute the CAM for the real features to indicate the class-relevant features and then promote the synthetic features to match with them.
    While for HFMDS, instead of computing the CAM for the real features, we augment the real features by transferring them towards the decision boundary and then compute the CAM for the hard features.}
    \label{fig:framework}
\end{figure*}

\subsection{Class-relevant Feature Matching Data Synthesis (FMDS)}
\label{FMDS}

% To let synthetic samples only maintain the task-relevant features, we design a feature matching loss between the real samples and synthetic samples and a classification loss for data synthesis.
% To ensure that synthetic samples retain only features that benefit the classification tasks, we propose a class-relevant feature matching loss between real and synthetic samples for data synthesis.

% \textbf{Class-relevant feature matching loss:}
During the training process of FL, the global model becomes more informative to the classification task and is able to extract the discriminative features as the communication round increases. 
Specifically, the global model $\bm{w}$, which consists of a feature extractor $\bm{w}^e$ and a classifier $\bm{w}^d$, extracts the feature of the real sample $\bm{x}$ according to $\bm{z}=f^e (\bm{w}^e;\bm{x})$, where $f^e: \mathcal{X} \rightarrow \mathcal{Z}$ denotes the feature mapping function.
% Specifically, with a model $\bm{w}$ that consists of a extractor $\bm{w}^e$ and a classifier $\bm{w}^d$, the feature $\bm{z}$ of the real sample $\bm{x}$ can be extracted via the extractor: $\bm{z}=f^e (\bm{w}^e;\bm{x})$, where $f^e: \mathcal{X} \rightarrow \mathcal{Z}$ is the function that maps the input space $\mathcal{X}$ to the feature space $\mathcal{Z}$.
However, the extracted features $\bm{z}$ contain both class-specific and redundant features, and learning all of them makes the synthetic sample similar to the real sample, arising privacy concerns.
Therefore, we encourage synthetic samples to only learn the useful class-specific features.

To achieve this goal, we adopt the class activation map (CAM) \cite{cam,grad_cam} by utilizing the gradients of the corresponding class with respect to the feature maps to localize the class-specific features.
With the output logits $\bm{q}=f^d(\bm{w}^d;\bm{z})$ from the classifier $\bm{w}^d$ by inputting the extracted features $\bm{z}$, the CAM is defined as the gradients of output logits $\bm{q}^y$ of class $y$ with respect to features $\bm{z}$ at as follows:
\begin{align}
    \bm{g}_z = \frac{\partial {\bm{q}^y}}{\partial{\bm{z}}}.
\end{align}
Note that higher positive values of gradients $\bm{g}_z$ represent higher importance of feature maps for the corresponding class $y$ and vice versa.
The class-specific features can be indicated by the positive value of gradients and we can obtain the positive CAM gradients via the rectified linear unit (ReLU) function: $\text{ReLU} (\bm{g}_z)$.
To encourage the synthetic samples to preserve these class-specific features, we propose a one-on-one class-relevant feature matching loss function for optimizing the synthetic samples. 
% After obtaining the positive CAM gradients via the ReLU function, we normalize and process them with a mapping function $M(x)$ to enhance the original feature maps as follows:
% Specifically, for a model $\bm{w}$ that consists of the extractor $\bm{w}^e$ and classifier $\bm{w}^d$, the feature $\bm{z}$ of the real sample $\bm{x}$ can be extracted via the extractor: $\bm{z}=f^e (\bm{w}^e;\bm{x})$, where $f^e: \mathcal{X} \rightarrow \mathcal{Z}$ is the function that maps the input space $\mathcal{X}$ to the feature space $\mathcal{Z}$.
By pushing the synthetic sample $\hat{\bm{x}}$ to match only the class-specific features with the real sample $\bm{x}$, we optimize the synthetic samples by minimizing the Kullback–Leibler (KL) divergence $D_{\text{KL}}$ between them as follows:
\begin{align}
\label{loss_fm}
    &\mathcal{L}_{\text{FM}} \triangleq D_{\text{KL}} \left( \hat{\bm{z}} \cdot \text{ReLU} (\bm{g}_z) || \bm{z} \cdot \text{ReLU} (\bm{g}_z) \right) \nonumber \\
    &= D_{\text{KL}} \left( f^e (\bm{w}^e; \hat{\bm{x}})  \cdot \text{ReLU} (\bm{g}_z) || f^e (\bm{w}^e; \bm{x})  \cdot \text{ReLU} (\bm{g}_z) \right).
\end{align}
With this objective function, only those essential features that are related to its corresponding label $y$ are indicated by the CAM and then learned by the synthetic sample $\hat{\bm{x}}$. 
Compared with prior works that use generative networks to imitate the distributions of real samples, i.e., they learn all the features of real samples, the objective function $\mathcal{L}_{\text{FM}}$ enables synthetic samples to maintain only the partial useful information of real samples, which thereby preserves the privacy of real samples.
The class-relevant features also guarantee the accuracy for the classification task.

\subsection{Hard Feature Matching Data Synthesis (HFMDS)}
\label{AFMDS_}
As will be demonstrated in the next section, the synthetic data generated by the proposed FMDS method can be used for feature alignment across clients and solves the non-IID issue. 
However, the output synthetic data $\hat{\bm{x}}$ in \eqref{loss_fm} still carries some information of real sample $\bm{x}$ and thus exposes partial privacy.
% Despite the efficacy of our proposed data synthesis method in solving the non-IID issue by learning the class-relevant features $\bm{z}$ of the real sample $\bm{x}$, these task-relevant features carried by the synthetic data $\hat{\bm{x}}$ still expose partial privacy of $\bm{x}$.
% To better preserve the privacy of real sample $\bm{x}$ and not sacrifice the task-specific features of the real sample $\hat{\bm{x}}$ for classification, instead of matching with the real feature $\bm{z}$, we encourage the synthetic sample $\hat{\bm{x}}$ to learn the semantic augmented features $\bm{z}+ \Delta \bm{z}$.
% To better preserve the privacy of the real sample $\bm{x}$ and maintain the effectiveness of the synthetic sample $\hat{\bm{x}}$ for classification, we encourage the synthetic sample $\hat{\bm{x}}$ to learn the semantic augmented features $\bm{z}+ \Delta \bm{z}$.
To better preserve the privacy of real sample $\bm{x}$ and maintain the effectiveness of the synthetic sample $\hat{\bm{x}}$ for classification, instead of matching with the real feature $\bm{z}$, we encourage the synthetic sample $\hat{\bm{x}}$ to learn the semantic augmented features $\bm{z}+ \mu \Delta \bm{z}$, where $\Delta \bm{z}$ is the transformation direction and $\mu > 0$ is the scaling factor for semantic feature augmentation, respectively.
% However, naively transferring the real features $\bm{z}$ along arbitrary direction $\Delta \bm{z}$ cannot advance the synthetic sample $\hat{\bm{x}}$ to carry meaningful features \cite{semantic_data_aug}.
Inspired by the hard sample mining techniques \cite{hard_sample_1, hard_sample_2} that utilize hard samples to smooth the decision boundary, as shown in Fig. \ref{fig:framework},  we transfer the real feature along the hard transformation direction so that the semantic augmented feature $\bm{z} + \mu \Delta \bm{z}$ is closer to the decision boundary and the synthetic data $\hat{\bm{x}}$ learned from it becomes the hard sample.

To obtain the hard transformation direction $\Delta \bm{z}$ for semantic feature augmentation, we search for an easy transformation direction $-\Delta \bm{z}$ so that the hard transformation direction is determined.
Since the features of samples from the same class are clustered together during the training, we formulate the \textit{prototype} $\Bar{\bm{z}}_c$, which is the center of the features of the samples from class $c$, and then use it to estimate the easy transformation direction as $- \Delta \bm{z}= \Bar{\bm{z}}_c - \bm{z}$.
Consequently, we obtain the hard transformation direction $\Delta \bm{z}=   \bm{z} - \Bar{\bm{z}}_c$.
Specifically, to compute the prototype $ \Bar{\bm{z}}_c$ for class $c$ during the $t$-th training round, we accumulate an intermediate feature set of $N_c$ real samples of class $c$ as $\{ \bm{z}_{c,n}^t | n=1, \cdots, N_c \}$ and then compute the prototype $\bm{z}_c^t$ by averaging the feature set: $\bm{z}_c^t= \frac{1}{N} \sum_{n=1}^N \bm{z}_{c,n}^t$.
% during $t$-th training round (epoch), we collect the intermediate features of real samples of class $c$ and then compute the prototype $\bm{z}_c^t$ by averaging them.
% we formulate a general feature $\bar{\bm{z}}_c$ for class $c$ named \textit{prototype} so that the semantic transformation $ \Delta \bm{z} =  \bar{\bm{z}}_c - \bm{z}$ is determined. 
% Searching for a meaningful prototype is non-trivial for practical training, and an arbitrary prototype may cause imprecise real feature transformation and heavily degrades the effectiveness of synthetic data.
% To achieve this goal, we adopt the class-wise averaged features of training samples of class $c$ to be the prototype $\bar{\bm{z}}_c$.
% With the features from the same class clustered together during the training, the prototype $\bar{\bm{z}}_c$ is likely to carry the general features of class $c$ for the classification task, with which the semantic transformation $\bar{\bm{z}}_c - z $ is guaranteed to be task-related and thus maintain the utility for synthetic samples.
% Particularly, during $t$-th training round (epoch), we collect the intermediate features of real samples of class $c$ and then obtain the prototype $\bm{z}_c^t$ by averaging them.
To prevent the prototype oscillation and improve the effectiveness of augmented features, we adopt the epoch momentum for  updating the prototype of each class $c\in [Y]$: $\bar{\bm{z}}_c^t = (1-\lambda) \bm{z}_c^t + \lambda \bm{z}_c^{t-1}$ with a momentum factor $\lambda \in [0,1]$. 
By applying the hard transformation direction $\Delta \bm{z} = \bar{\bm{z}}_c^t - \bm{z}$ and the scaling factor $\mu >0 $, the semantic hard features $\bm{z}_{{h,c}}$ for the real features $\bm{z}$ of class $c$ can be formulated as follows:
\begin{align}
    \bm{z}_{h,c} = \bm{z} + \mu \Delta \bm{z} = (1+\mu) \bm{z} - \mu \bar{\bm{z}}_c^t.
\end{align}
% where $\mu$ is the scaling factor for semantic feature augmentation.
% We adopt $\lambda=0.5$ and $\mu = 0.5$ in experiments.
With the corresponding CAM $\bm{g}_{z_{h,c}}=\frac{\partial {\bm{q}^y}}{\partial{\bm{z}_{h,c}}}$ for semantic hard features $\bm{z}_{{h,c}}$, which can be obtained by inputting $\bm{z}_{h,c}$ into the classifier and then computing the gradients $\bm{g}_{z_{h,c}}$, the loss function $\mathcal{L}_{\text{HFM}}$ for hard feature matching data synthesis (HFMDS) is reformulated as follows:
\begin{align}
    &\mathcal{L}_{\text{HFM}} \triangleq D_{\text{KL}} \left( \hat{\bm{z}} \cdot \bm{g}_{z_{h,c}} || \bm{z}_{h,c} \cdot \bm{g}_{z_{h,c}} \right) \nonumber \\
    &= D_{\text{KL}} \left( f^e (\bm{w}^e; \hat{\bm{x}})  \cdot \bm{g}_{z_{h,c}} || \left[ (1+\mu) f^e (\bm{w}^e; \bm{x}) - \mu \bar{\bm{z}}_c^t \right]  \cdot \bm{g}_{z_{h,c}} \right).
\end{align}

Compared with the partial real features in \eqref{loss_fm}, the synthetic data learned with the semantic hard features $\bm{z}_{h,c}$ carry the features that are closer to the decision boundary, with which the trained model becomes more generalized. 
Meanwhile, the augmentation for the real features erases more features of real samples, which is controlled by the scaling factor $\mu$, thereby further preserving the privacy of real samples.
From the mixup perspective, compared with FedMix which exploits the vanilla mixup at the input space, our proposed semantic feature augmentation method promotes a manifold mixture between the real feature $\bm{z}$ and a virtual feature $(1+2\mu) \bm{z} - 2\mu \bar{\bm{z}}_c^t$ at the feature space: $\bm{z}_{h,c} = (1+\mu) \bm{z} - \mu \bar{\bm{z}}_c^t = \frac{1}{2} \left( \bm{z} +  (1+2\mu) \bm{z} - 2\mu \bar{\bm{z}}_c^t \right)$, with which the synthetic data can smooth the decision boundary for the model and thus guarantee their utility \cite{manifold_mixup}.

Combined with the aforementioned classification loss function $\mathcal{L}_C \triangleq \ell \left( f(\bm{w};\hat{\bm{x}}), y \right)$, the objective function of HFMDS for optimizing the synthetic sample $\hat{\bm{x}}$ with the corresponding real sample $\bm{x}$ is expressed as follows:
\begin{align}
    \mathcal{L}_{\text{HFMDS}} =  \mathcal{L}_{\text{HFM}} + \mathcal{L}_{{C}}. 
    \label{overall_loss}
\end{align}
Please note that the label $y$ of the synthetic data should be defined as that of the real sample $\bm{x}$ since we optimize the synthetic sample $\hat{\bm{x}}$ together with the one-on-one feature matching loss $\mathcal{L}_{\text{HFM}}$. 

% With the semantic hard features for each class, the objective function for hard feature matching data synthesis is given by:
% \begin{align}
%     \mathcal{L}_{\text{HFMDS}} =  \mathcal{L}_{\text{HFM}} + \mathcal{L}_{\text{C}}. 
%     \label{AFMDS}
% \end{align}

\begin{algorithm}
    \renewcommand{\algorithmicrequire}{\textbf{Input:}}
    \renewcommand{\algorithmicensure}{\textbf{Output:}}
    \caption{HFMDS-FL}
    \label{algo}
    \begin{algorithmic}[1]
    \REQUIRE Communication round ${T}$; data synthesis duration $T_d$; local steps $\mathcal{T}_k$ for client $k$; training steps $\mathcal{T}_g$ for data synthesis; client number $K$; global model $\bm{w}$ (contains extractor $\bm{w}^e$ and classifier $\bm{w}^d$); learning rate $\eta$; batch size $B$; Synthetic dataset size $B_g$;
        \STATE \textbf{Initialization}: model $\bm{w}_0$
        \FOR{ each communication round $t=1,\dots, {T}$}
        \IF{$t \mod T_d = 0 $ and $t \ne 0$}
        \FOR{each client $k \in \mathcal{K}$ \textbf{in parallel}}
        \STATE $ \hat{\mathcal{D}}_{k,\text{syn}} \leftarrow \text{DataSynthesis}(\bm{w}^t, \hat{\mathcal{D}}_k)$
        \ENDFOR
        \STATE Collect the synthetic datasets and aggregate them to constitute the synthetic dataset $\hat{\mathcal{D}}_{\text{syn}}$.
        \STATE All the clients download the synthetic dataset $\hat{\mathcal{D}}_{\text{syn}}$.
        \ENDIF 
        \STATE \textbf{Server Executes:}
        \STATE Sample active client set $\mathcal{K}_t$ from $K$ clients.
        \FOR{each client $k \in \mathcal{K}_t$ \textbf{in parallel}}
        \STATE $\bm{w}_k^t \leftarrow$ \text{ClientUpdate} ($\bm{w}^t,\hat{\mathcal{D}}_k$)
        \ENDFOR
        \STATE $\bm{w}^{t+1} = \sum_{k \in \mathcal{S}_t} |\frac{1}{\mathcal{K}_t}| \bm{w}_k^t$ \quad $\triangleright$ Global aggregation
        \ENDFOR
        \end{algorithmic}
        % \textbf{Operations at the client side:\\}
        % \textbf{def} \textit{LocalTrain}($\mathbf{w}_t$):
        \textbf{def} \text{ClientUpdate($\bm{w}^t, \hat{\mathcal{D}}_k, \hat{\mathcal{D}}_{\text{syn}}$):}
        \begin{algorithmic}[1]
        % \FOR{ each epoch $\gamma=1,\dots,E$}
        \FOR{each step $\tau=1, \dots, \mathcal{T}_k$}
        \STATE $\bm{w}_{k}^{t,\tau} \leftarrow \bm{w}_{k}^{t,\tau-1} - \eta \nabla \ell (\bm{w}_{k}^{t,\tau-1},\xi_{k}^{t,\tau} )$ \quad $\triangleright$ Update local models via Equation \eqref{local update}
        \STATE Accumulate the real features for each class.
        % \ENDFOR
        \ENDFOR
        \FOR{each class $c = 1, \dots, Y$}
        \STATE Update the local prototype $\bar{\bm{z}}_c^t$.
        \ENDFOR
        \RETURN{$\bm{w}_{k}^{t,\tau}$}
        \end{algorithmic}
        \textbf{def} \text{DataSynthesis($\bm{w}_{t}, \hat{\mathcal{D}}_{k}$):}
        \begin{algorithmic}[1]
        \STATE Sample a real data subset $\hat{\mathcal{D}}_{k,{r}}$ from $\hat{\mathcal{D}}_k$ with size $B_g$ for dataset synthesis. 
        \STATE Initialize a synthetic dataset $\hat{\mathcal{D}}_{k, \text{syn}}$ with size $B_g$ by following a Gaussian distribution $\mathcal{N}(0,\bm{I})$.
        \FOR{each step $\tau = 1, \dots, \mathcal{T}_g$}
        % \STATE update the synthetic dataset $\hat{\mathcal{D}}_{k, \text{syn}}$ using the loss function \eqref{overall_loss} (for FMDS-FL). 
        \STATE update the synthetic dataset $\hat{\mathcal{D}}_{k, \text{syn}}$ using the loss function \eqref{overall_loss}.
        \ENDFOR
        \RETURN{$\hat{\mathcal{D}}_{k, \text{syn}}$}
        \end{algorithmic}
\end{algorithm}

\subsection{Hard Feature Matching Data Synthesis for Federated Learning (HFMDS-FL)}
\label{HFMDS-FL}
By integrating the proposed HFMDS method with FL, we propose the new framework of \textit{HFMDS-FL} to solve the non-IID issue in FL.
Different from existing works such as FedMix and FedGAN which conduct the data synthesis before the FL process, we perform data synthesis on-the-fly along the FL process.
This is because the quality of synthetic samples relies on the task relevance of real features, which can be better extracted by a well-trained global model.
% whether the real features are task-relevant, and the better global model during federated learning can extract more task-relevant features for data synthesis.
Our proposed algorithm \textit{HFMDS-FL} is implemented based on FedAvg and the detailed procedures are summarized in Algorithm \ref{algo}.

The clients conduct data synthesis every $T_d$ communication rounds to reduce the computation burden.
In the $t$-th communication round, if the clients need to generate synthetic data, they first download the global model $\bm{w}_t$.
Each client $k\in[K]$ then samples a subset of data $\hat{\mathcal{D}}_{k,r}$ with $B_g$ training samples from $\hat{\mathcal{D}}_k$ and initializes a synthetic dataset $\hat{\mathcal{D}}_{k,\text{syn}}$ with $B_g$ training samples generated from the Gaussian distribution $\mathcal{N}(0,\bm{I})$.
The client optimizes the synthetic dataset $\hat{\mathcal{D}}_{k,\text{syn}}$ by minimizing the loss function in \eqref{overall_loss}.
% To further preserve the privacy of real samples in $\hat{\mathcal{D}}_{k,\text{r}}$, we provide another option by using the hard feature matching data synthesis with the objective function \eqref{AFMDS}.
% Please note that the data synthesis is conducted every $T_d$ communication rounds to reduce the computation burden for clients.
After data synthesis, all the local synthetic datasets are collected by the server to constitute the global synthetic dataset $\hat{\mathcal{D}}_{\text{syn}}$, which is shared with clients for regularizing the local training.
In each local training step, each active client $k \in \mathcal{K}_t$ respectively samples $B$ real samples from $\hat{\mathcal{D}}_k$ and $B$ synthetic samples from $\hat{\mathcal{D}}_{\text{syn}}$, and update the local model using the objective function in \eqref{local update}.
During local training, the features of real samples extracted by the local model $\bm{w}^t_k$ are accumulated class-wise, which are used to update the local prototypes for each class.
After local training, the local models are uploaded to the server for global aggregation.

\section{Analysis of HFMDS-FL}
\label{analysis}
In this section, we provide multiple perspectives to understand HFMDS-FL. 
We first visualize what knowledge the synthetic data learn from the real data via HFMDS, then analyze why sharing the synthetic data generated by HFMDS is able to solve the non-IID issue, from the viewpoints of \textit{distribution matching} and \textit{domain adaptation}. 
% We primarily focus on interpreting the rationale behind FedSyn and leave detailed discussion and derivations to the supplementary.

\begin{figure*}
    \centering
    \includegraphics[width = \textwidth]{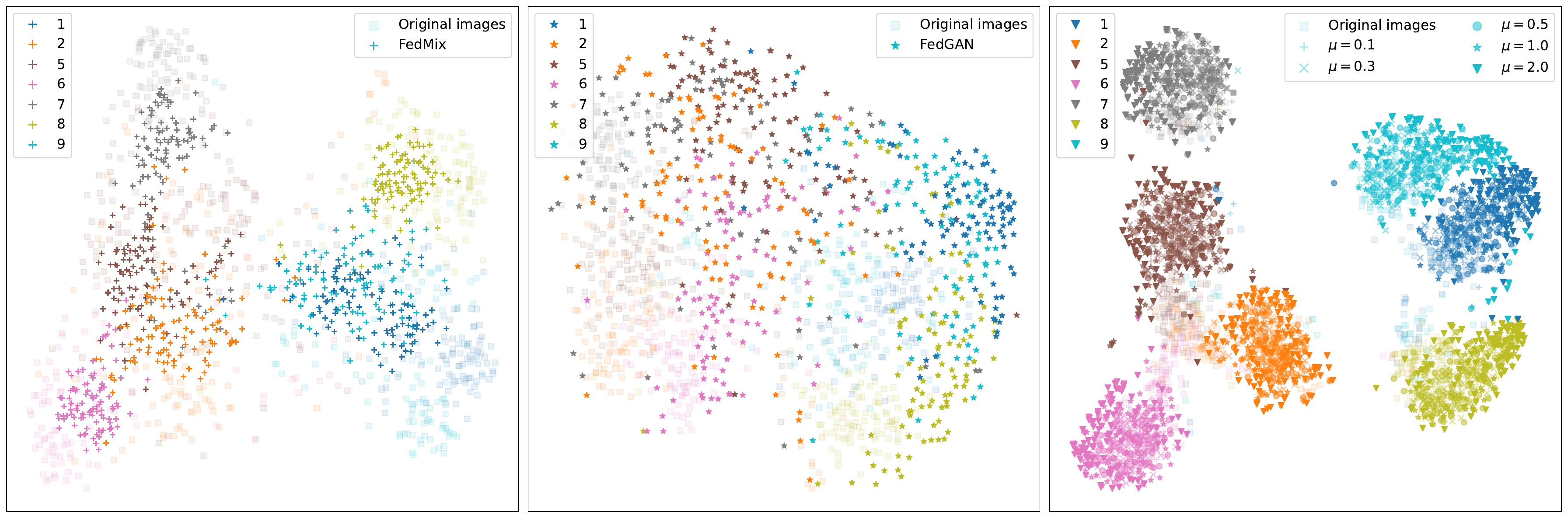}
    \caption{T-SNE plots of feature distribution for randomly selected classes of synthetic data generated by FedMix (left), FedGAN (middle), and HFMDS-FL (right) on CIFAR-10. The \textit{dark} dots represent synthetic data and the \textit{light} dots stand for real data.
    For HFMDS-FL, the synthetic samples generated with different values of $\mu$ are highlighted in varying levels of brightness, and the synthetic samples with larger values of $\mu$ are closer to the decision boundary.}
    % The synthetic samples are generated with the same real samples and those generated by HFMDS-FL achieve a closer feature distribution with real samples.}
    \label{fig:tsne}
\end{figure*}

% \begin{figure}
%     \centering
%     \includegraphics[width = 0.4\textwidth]{img/syn_aug_tsne.pdf}
%     \caption{T-SNE plots of feature distribution of randomly selected classes of synthetic data generated by using varying augment factors $\mu$ on CIFAR-10. 
%     Different colors represent different classes and different markers represent different synthetic data generated by different augment factors.
%     We use darker colors for the synthetic data generated by the larger augment factors, and they are closer to the decision boundary.
%     }
%     \label{fig:tsne_mu}
% \end{figure}

\subsection{Effectiveness of HFMDS}
\textbf{Feature Matching between Real and Synthetic Data:}
We sample the real data and generate the synthetic data using FedMix, FedGAN, and HFMDS-FL on CIFAR-10.
We visualize the feature distribution of their synthetic data in Fig. \ref{fig:tsne}.
With the mixture of the same class of data for FedMix, some classes of data cannot have a similar feature distribution of original real data and tangle with other classes of data, e.g., class 1 ({blue}) and class 9 ({cyan}). 
In FedGAN, the features of synthetic data generated by GANs have a distinct distribution gap with those of real data.
This is because the generative models require an extremely long time and numerous real data for training, and insufficient training time leads to unqualified synthetic data.
Compared with FedMix and FedGAN, our proposed method HFMDS-FL is able to generate synthetic data that have a compact and similar feature distribution with the real samples.
Also, with our proposed feature-matching objective function, the synthetic data quality of HFMDS-FL is insensitive to the real data number, generating the same-distribution synthetic data even with 100 real samples, which makes it more compatible with cross-device FL systems.

\textbf{Effectiveness of Hard Feature Augmentation:}
To analyze the effectiveness of our proposed HFMDS method, we visualize the feature distribution of synthetic data with varying scaling factors $\mu$ and highlight them with different levels of brightness.
The results in Fig. \ref{fig:tsne} show that with larger values of $\mu$, the features of synthetic samples move towards the decision boundary, which thus prevents model overfitting and improves the model's generalization capability.
In addition, with the larger values of $\mu$, the hard features (darker colors) are farther away from the original real features (the lightest color), which reduces privacy leakage compared with the original class-relevant FMDS method.
A sensitive analysis on scaling factors $\mu$ is conducted in Section \ref{ablation study} for further discussion.

\subsection{Feature Alignment across Clients}
A notable difference between HFMDS-FL and prior FL methods is that the shared synthetic data are utilized to regularize the local models.
Consequently, the synthetic data, which provide additional inductive bias for clients, reduce the local model bias by aligning features for each class across clients. To clarify, we present two following propositions.

\begin{proposition}
\label{remark1}
    With the local dataset $\hat{\mathcal{D}}_k$ at client $k$ and a real sample $(\bm{x}_{k}, y_{k})$ from it,
    by updating the local model with the real sample $(\bm{x}_{k}, y_{k})$, we can minimize the KL-divergence between two conditional feature distributions as follows:
    \begin{align}
        & \quad \min_{\bm{w}^e_k, \bm{w}^d_k} - \mathbb{E}_{(\bm{x}_k,y_k) \sim \hat{\mathcal{D}}_k}  \mathbb{E}_{ \bm{z} \sim p(\bm{z}|\bm{x}_{k}; \bm{w}^e_k)} \log p(y_k|\bm{z};\bm{w}^d_k) \nonumber \\
        &\equiv \min_{\bm{w}^e_k,\bm{w}^c_k} D_{\text{KL}} \left( p(z|\bm{x}_{k}; \bm{w}^e_k) || p(\bm{z}|{y}_{k}; \bm{w}^d_k) \right),
    \label{remark1_equa}
    \end{align}
    where we define $p(\bm{z}|{y}_{k}; \bm{w}^d_k)$ as the probability that the intermediate feature inputting into the classifier $\bm{w}_k^d$ is $\bm{z}$ given that it yields a label ${y}_{k}$.
    \begin{proof}
        Please refer to Appendix \ref{proof_remark_1}.
    \end{proof}
\end{proposition}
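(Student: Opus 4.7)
The plan is to recast the inner cross-entropy as a KL divergence plus data-dependent terms that are constant in the optimization variables. First I would apply Bayes' rule to the predictive probability produced by the classifier, namely
\begin{equation*}
\log p(y_k \mid \bm{z}; \bm{w}_k^d) \;=\; \log p(\bm{z} \mid y_k; \bm{w}_k^d) \;+\; \log p(y_k) \;-\; \log p(\bm{z}),
\end{equation*}
so that the per-class feature distribution $p(\bm{z}\mid y_k;\bm{w}_k^d)$ appearing on the right-hand side of the proposition is exposed. Substituting this identity into the negative log-likelihood on the left-hand side of \eqref{remark1_equa} replaces the classifier's posterior with its class-conditional counterpart and produces two bookkeeping terms, $\log p(y_k)$ and $\log p(\bm{z})$, that do not involve the classifier parameters.

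Next I would manufacture the KL divergence by inserting $\pm\log p(\bm{z}\mid \bm{x}_k;\bm{w}_k^e)$ inside the inner expectation $\bm{z}\sim p(\bm{z}\mid \bm{x}_k;\bm{w}_k^e)$. Grouping the logarithms yields, for each fixed $(\bm{x}_k,y_k)$,
\begin{equation*}
-\mathbb{E}_{\bm{z}\sim p(\bm{z}\mid \bm{x}_k;\bm{w}_k^e)}\log p(y_k\mid\bm{z};\bm{w}_k^d) \;=\; D_{\text{KL}}\!\left(p(\bm{z}\mid\bm{x}_k;\bm{w}_k^e)\,\big\|\,p(\bm{z}\mid y_k;\bm{w}_k^d)\right) + H\!\left(p(\bm{z}\mid\bm{x}_k;\bm{w}_k^e)\right) - \log p(y_k) + \mathbb{E}_{\bm{z}}\log p(\bm{z}).
\end{equation*}
Taking the outer expectation over $(\bm{x}_k,y_k)\sim\hat{\mathcal{D}}_k$ keeps the KL term on the right in the form requested by the proposition and collects the other three pieces into data-side quantities.

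The crux of the argument, and the place I expect the most friction, is showing that these three residual terms do not affect the argmin. The label log-prior term $-\mathbb{E}_{y_k}\log p(y_k)$ is patently a property of the local data distribution and carries no dependence on $(\bm{w}_k^e,\bm{w}_k^d)$. For the feature-marginal term $\mathbb{E}_{\bm{z}}\log p(\bm{z})$, I would treat $p(\bm{z})$ as a fixed reference density on the feature space (independent of the current parameters), in line with how the proposition defines the conditional $p(\bm{z}\mid y_k;\bm{w}_k^d)$ relative to the classifier alone. The delicate term is the encoder entropy $H(p(\bm{z}\mid\bm{x}_k;\bm{w}_k^e))$, which in principle does depend on $\bm{w}_k^e$. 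The cleanest way to dispense with it is to invoke the deterministic-encoder convention used throughout the paper: since $\bm{z}=f^e(\bm{w}_k^e;\bm{x}_k)$ is a point mass, its (differential) entropy is a fixed reference constant and can be absorbed.

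With all three residual terms constant in $(\bm{w}_k^e,\bm{w}_k^d)$, the two minimization problems in \eqref{remark1_equa} share the same minimizers, which gives the claimed equivalence (understood as equality of argmins rather than of objectives). The main obstacle is thus the entropy term: any weakening of the deterministic-encoder assumption would either require an additional stipulation that $H(p(\bm{z}\mid\bm{x}_k;\bm{w}_k^e))$ is invariant in $\bm{w}_k^e$, or force a reinterpretation of the $\equiv$ symbol as an upper-bound surrogate (as in the ELBO), and I would flag this explicitly in the write-up.
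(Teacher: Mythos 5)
Your proposal follows essentially the same route as the paper's proof in Appendix A: apply Bayes' rule to rewrite $\log p(y_k\mid\bm{z};\bm{w}_k^d)$ in terms of $\log p(\bm{z}\mid y_k;\bm{w}_k^d)$, discard $\log p(y_k)$ and $\log p(\bm{z})$ as parameter-independent, insert $-\log p(\bm{z}\mid\bm{x}_k;\bm{w}_k^e)$ to assemble the KL divergence, and read $\equiv$ as equivalence of the optimization problems. Your explicit handling of the encoder-entropy term (absorbed via the deterministic-encoder convention) is in fact more careful than the paper, which passes over it silently in its chain of $\equiv$ steps.
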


\begin{proposition}
\label{remark2}
    Let $(\bm{x}_{k^\prime}, y_{k^\prime})$ be the real sample from the local dataset $\hat{\mathcal{D}}_{k^\prime}$ of client $k^\prime$, and $(\hat{\bm{x}}_{k^\prime}, {y}_{k^\prime})$ is its corresponding synthetic sample.
    After the data synthesis, the synthetic dataset $\hat{\mathcal{D}}_{k^\prime, \text{syn}}$ of client $k^\prime$ is shared and used for local updating at client $k$.
    By regulating the local model of client $k$ with the synthetic sample $(\hat{\bm{x}}_{k^\prime}, y_{k^\prime})$, we can minimize the conditional representation KL-divergence between client $k$ and client $k^\prime$ as follows:
    \begin{align}
       & \min_{\bm{w}^e_k, \bm{w}^d_k} -  \mathbb{E}_{(\hat{\bm{x}}_{k^\prime},y_{k^\prime}) \sim \hat{\mathcal{D}}_{k^\prime, \text{syn}} } \mathbb{E}_{\bm{z} \sim p(\bm{z}|\hat{\bm{x}}_{k^\prime}; \bm{w}^e_k)} \log p(y_{k^\prime}|\bm{z};\bm{w}^d_k) \nonumber \\ \approx &\min_{\bm{w}^e_k,\bm{w}^d_k} D_{\text{KL}} \left( p(\bm{z}|\bm{x}_{k^\prime}; \bm{w}^e_g) || p(\bm{z}|{y}_{k^\prime}; \bm{w}^d_k) \right) \label{remark2_equa},
    \end{align}
    where $p(\bm{z}|{y}_{k^\prime}; \bm{w}^d_k)$ is the probability that the intermediate feature inputting into the classifier $\bm{w}_k^d$ is $\bm{z}$ given that it yields a label ${y}_{k^\prime}$, and $\bm{w}_g^e$ is the extractor of the global model used for data synthesis, respectively.
    \begin{proof}
        Please refer to Appendix \ref{proof_remark_2}.
    \end{proof}
\end{proposition}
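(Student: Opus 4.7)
The plan is to mirror the derivation of Proposition \ref{remark1} and then close the argument with the feature-matching property that defines HFMDS. First, treating $(\hat{\bm{x}}_{k^\prime}, y_{k^\prime})$ exactly as Proposition \ref{remark1} treats $(\bm{x}_k, y_k)$, I will rewrite the inner expectation $-\mathbb{E}_{\bm{z}\sim p(\bm{z}|\hat{\bm{x}}_{k^\prime};\bm{w}_k^e)}\log p(y_{k^\prime}|\bm{z};\bm{w}_k^d)$ by applying Bayes' rule to $p(y_{k^\prime}|\bm{z};\bm{w}_k^d) = p(\bm{z}|y_{k^\prime};\bm{w}_k^d)\,p(y_{k^\prime})/p(\bm{z})$, dropping the factors that are constant in $(\bm{w}_k^e,\bm{w}_k^d)$, and identifying the remainder as $D_{\text{KL}}\!\left(p(\bm{z}|\hat{\bm{x}}_{k^\prime};\bm{w}_k^e)\,\|\,p(\bm{z}|y_{k^\prime};\bm{w}_k^d)\right)$ up to an additive entropy constant. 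At this stage the identity still holds as an equivalence ``$\equiv$''; the approximation sign in the claimed statement only enters in the next step.

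Next, I will replace $p(\bm{z}|\hat{\bm{x}}_{k^\prime};\bm{w}_k^e)$ by $p(\bm{z}|\bm{x}_{k^\prime};\bm{w}_g^e)$. This substitution is justified by two ingredients. First, the HFMDS objective \eqref{overall_loss}, through $\mathcal{L}_{\text{HFM}}$, explicitly matches the class-relevant projection of $f^e(\bm{w}_g^e;\hat{\bm{x}}_{k^\prime})$ to $f^e(\bm{w}_g^e;\bm{x}_{k^\prime})$, so at the synthesis-time extractor the two induced conditional feature distributions essentially coincide. Second, during a small number of local SGD steps the local extractor $\bm{w}_k^e$ has not drifted far from the global extractor $\bm{w}_g^e$ from which it was initialized, so by continuity of $f^e$ in the parameters we have $p(\bm{z}|\hat{\bm{x}}_{k^\prime};\bm{w}_k^e) \approx p(\bm{z}|\hat{\bm{x}}_{k^\prime};\bm{w}_g^e)$. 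Chaining these two gives the approximation claimed in \eqref{remark2_equa}.

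The main obstacle will be making the approximation step quantitatively credible. The feature-matching loss $\mathcal{L}_{\text{HFM}}$ only controls the CAM-gated coordinates of $\bm{z}$, so any residual mismatch on the redundant coordinates must be absorbed into the ``$\approx$'' without contaminating the classification-relevant log-likelihood. I would address this by noting that, by construction of the CAM gating, the ungated coordinates carry a vanishing derivative into $p(y_{k^\prime}|\bm{z};\bm{w}_k^d)$, so they contribute only a lower-order term to the KL objective. The local-versus-global extractor discrepancy is then handled by a standard local-drift argument under a Lipschitz assumption on $f^e$ in its parameters, which I would state as an informal hypothesis consistent with the ``$\approx$'' symbol in the proposition rather than promote to an exact equality.
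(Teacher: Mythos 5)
Your proposal is correct and follows essentially the same route as the paper: the same Bayes-rule expansion of the cross-entropy into $D_{\text{KL}}\left( p(\bm{z}|\hat{\bm{x}}_{k^\prime}; \bm{w}^e_k) \,\|\, p(\bm{z}|{y}_{k^\prime}; \bm{w}^d_k) \right)$, followed by the same two approximations (local extractor $\approx$ global extractor due to the per-round model download, and synthetic features $\approx$ real features via the matching loss). The only presentational difference is that the paper absorbs the CAM-gating residual by inserting an intermediate full-feature-matched sample $\Tilde{\bm{x}}_{k^\prime}$ between $\hat{\bm{x}}_{k^\prime}$ and $\bm{x}_{k^\prime}$, whereas you argue directly that the ungated coordinates contribute only lower-order terms; both are informal to the same degree, and both (like the paper's appendix) really establish the FMDS case, with the hard-augmented target handled only by the remark that the real sample can be viewed as a virtual hard sample.
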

Note the loss functions on the left-hand sides (LHSs) of \eqref{remark1_equa} and \eqref{remark2_equa} are the components of the local update of client $k$.
By updating the local models with the local objective function \eqref{local update}, we can minimize the KL-divergence terms in the right-hand sides (RHSs) of \eqref{remark1_equa} and \eqref{remark2_equa}.
Therefore, the KL divergence between the representation distributions of client $k$ (i.e., $p(\bm{z}|\bm{x}_{k}^i; \bm{w}^e_k)$) and client $k^\prime$ (i.e.,  $p(\bm{z}|\bm{x}_{k^\prime}^i; \bm{w}^e_g)$) can be reduced with the same label ${y}_{k}^i={y}_{k^\prime}^i$.
In other words, with the participation of synthetic data optimized by our proposed class-relevant feature-matching loss function, we can promote the feature alignment between the local model $\bm{w}^e_k$ of client $k$ and the global model $\bm{w}^e_g$ for the same class of data.
Consequently, the features for each class can be aligned across clients and thus the non-IID problem is solved.
For HFMDS-FL, the real sample $x_{k^\prime}^i$ can be regarded as a virtual hard sample, and the synthetic sample $\hat{\bm{x}}_{k^\prime}^i$ is optimized by matching the class-relevant features with it, which still achieves feature alignment across clients.

\subsection{Generalization Error Bound Analysis}
To further demonstrate the effectiveness of our proposed HFMDS-FL, we establish an upper bound for the generalization error of federated learning.
To ensure clarity, following \cite{shao2023selective}, we supplement some relevant definitions before delving into the analysis.

\begin{definition}
    (\textit{Hypothesis ${\bm{h}}$})
    Let $\mathcal{V}$ be a one-hot vector set that contains $Y$ one-hot vectors for a $Y$-class classification task.
    Given a sample $\bm{x} \in \mathcal{X}$, a hypothesis $\bm{h}: \mathcal{X} \rightarrow \mathcal{V}$ outputs the one-hot vector $\bm{v} \in \mathcal{V}$ with $\bm{v}_c=1$ and $\bm{v}_j=0, j \neq c$ to represent the class label $c \in [Y]$. 
\end{definition}

\begin{definition}
    (\textit{Minimum combined loss}) The ideal predictor in the hypothesis space $\hat{\mathcal{H}}_k$ achieves the minimum combined loss over any two datasets. We define two representatives as follows:
    \begin{align}
        \lambda_k &= \min_{ \hat{\bm{h}}_k \in  \hat{\mathcal{H}}_k} \left\{ \mathcal{L}_{\hat{\mathcal{D}}_g} (\hat{\bm{h}}_k, \hat{\bm{h}}^*) + \mathcal{L}_{{\mathcal{D}}_{{k}}} (\hat{\bm{h}}_k, \hat{\bm{h}}^*) \right\},
    \end{align}
    and
    \begin{align}
        \lambda_{k,\text{syn}} &= \min_{ \hat{\bm{h}}_k \in  \hat{\mathcal{H}}_k} \left\{ \mathcal{L}_{\hat{\mathcal{D}}_g} (\hat{\bm{h}}_k, \hat{\bm{h}}^*) + \mathcal{L}_{{{\mathcal{D}}}_{\text{syn}}} (\hat{\bm{h}}_k, \hat{\bm{h}}^*) \right\},
    \end{align}
    where $\hat{\bm{h}}^*: \mathcal{X} \rightarrow \mathcal{V}$ is the labeling function to output the ground-truth class label, and we assume that global dataset and local datasets have the same labeling function.
\end{definition}
If the dataset ${\mathcal{D}}_k$ has a distinct distribution gap with the global dataset $\hat{\mathcal{D}}_g$, it is unlikely to find a local hypothesis $\hat{\bm{h}}_k$ to minimize the minimum combined loss.
As the synthetic dataset ${\mathcal{D}}_{\text{syn}}$ is collected from all the clients, it has a closer distribution with the global dataset $\hat{\mathcal{D}}_g$, which leads to a smaller minimum combined loss, i.e., $\lambda_{k, \text{syn}}< \lambda_{k}$.

\begin{definition}
    (\textit{Hypothesis space $\mathcal{G}_k$}) For a hypothesis space $\hat{\mathcal{H}}_k$, we define a hypothesis space $\mathcal{G}_k: \mathcal{X} \rightarrow \{0,1\}$ for the hypotheses $g_k$, and $g_k(\textbf{x}) = \frac{1}{2} \| \hat{\textbf{h}}_k (\textbf{x}) -  \hat{\textbf{h}}^\prime_k (\textbf{x})\|$ with $\hat{\textbf{h}}_k, \hat{\textbf{h}}^\prime_k \in \hat{\mathcal{H}}_k$. 
\end{definition}

\begin{definition}
    (\textit{$\mathcal{G}_k$-\text{distance}}) Given any two datasets $\mathcal{D}$ and $\mathcal{D}^\prime$ over $\mathcal{X}$, let $\mathcal{G}_k=\{ g_k: \mathcal{X} \rightarrow \{ 0,1\}\}$ be a hypothesis space, and the $\mathcal{G}_k$-\text{distance} between the distributions of $\mathcal{D}$ and $\mathcal{D}^\prime$ is defined as $d_{\mathcal{G}_k} (\mathcal{D}, \mathcal{D}^\prime)=2\sup_{g_k \in \mathcal{G}_k} | \Pr_{\mathcal{D}} [g_k(\textbf{x}=1)] -   \Pr_{\mathcal{D}^\prime} [g_k(\textbf{x}=1)] |$.
\end{definition}

With the above definitions, we provide the generalization error bound of the global hypothesis $\hat{h}$ on the global dataset $\hat{\mathcal{D}}_g$ in the following theorem.
% with $K$ clients.
% For the generalization error 

\begin{theorem}
\label{theorem_convergence}
    Let $m_k$ and $m_{\text{syn}}$ be the size of the empirical local dataset $\hat{\mathcal{D}}_k$ and synthetic dataset $\hat{\mathcal{D}}_{\text{syn}}$ drawn from the $\mathcal{D}_k$ and $\mathcal{D}_{\text{syn}}$, respectively. 
    Denote $\hat{h}_k$ the hypothesis learned on the local dataset $\mathcal{D}_k$, and $\hat{h}=\frac{1}{K} \sum_{k=1}^K \hat{h}_k$ the global hypothesis ensembled by the local hypotheses. 
    % With probability at least $1-\delta$, $\delta \in (0,1)$, we obtain the generalization error of the global hypothesis $\hat{h}$ on the global dataset $\hat{\mathcal{D}}_g$ as follows:
    Given any $\delta \in (0,1)$, the following holds:
    \begin{align}
        &\mathcal{L}_{\hat{\mathcal{D}}_{g}} (\hat{h}) \leq \frac{1}{K} \sum_{k=1}^K  \mathcal{L}_{\hat{\mathcal{D}}_{g}} (\hat{h}_k, \hat{h}^*) \nonumber \\ 
        &\leq \frac{1}{K} \sum_{k=1}^K \left[ \underbrace{\mathcal{L}_{\hat{\mathcal{D}}_k \cup \hat{\mathcal{D}}_{\text{syn}} } (\hat{h}_k)}_{\text{Local empirical loss}} + \underbrace{\sqrt{-\frac{1}{2}  (\frac{\alpha^2}{m_k} + \frac{(1-\alpha)^2}{m_{\text{syn}}} ) \log \frac{\delta}{2K}}}_{\text{Numerical constraint}} \right. \nonumber \\
        & \left.+ \alpha [ \lambda_k + d_{\mathcal{G}_k} ({\mathcal{D}}_k, \hat{\mathcal{D}}_g) ]
        + (1-\alpha) [\lambda_{k,{\text{syn}}} + d_{\mathcal{G}_k} ({\mathcal{D}}_{\text{syn}}, \hat{\mathcal{D}}_g) ] \right] \label{bound},
    \end{align}
     with probability at least $1-\delta$, where $\lambda_k$ is the minimum combined loss for the global dataset ${\hat{\mathcal{D}}}_g$ and local training dataset ${\mathcal{D}}_k$ of client $k$, and $\lambda_{k, \text{syn}}$ is the minimum combined loss for the global dataset ${\hat{\mathcal{D}}}_g$ and synthetic dataset ${\mathcal{D}}_{\text{syn}}$.
    \begin{proof}
        Please refer to Appendix \ref{proof_theorem1}.
    \end{proof}
    \iffalse
    $\lambda_k = \min_{\hat{h}_k \in \hat{\mathcal{H}}_k} \left\{ \mathcal{L}_{{\mathcal{D}}_g} (\hat{h}_k, \hat{h}^*) + \mathcal{L}_{\hat{\mathcal{D}}_k} (\hat{h}_k, \hat{h}^*) \right\}$ is the combined optimal loss achieved by the hypothesis $\hat{h}_k$ over $\hat{\mathcal{H}}_k$ for the test dataset ${\mathcal{D}}_g$ and training dataset $\hat{\mathcal{D}}_k$ of client $k$, and $\lambda_{k,{\text{syn}}} = \min_{\hat{h}_k \in \hat{\mathcal{H}}_k} \left\{ \mathcal{L}_{{\mathcal{D}}_g} (\hat{h}_k, \hat{h}^*) + \mathcal{L}_{\hat{\mathcal{D}}_{\text{syn}}} (\hat{h}_k, \hat{h}^*) \right\}$ is the combined optimal loss achieved by the hypothesis $\hat{h}_k$ over $\hat{\mathcal{H}}_k$ for the test dataset ${\mathcal{D}}_g$ and synthetic dataset $\hat{\mathcal{D}}_{\text{syn}}$.
    \fi
\end{theorem}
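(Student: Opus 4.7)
The plan is to derive the bound in three stages: first, relate the ensemble error to the average of the individual local errors; second, apply a domain-adaptation inequality client-by-client; third, convert the true risks to empirical ones via Hoeffding's inequality combined with a union bound over clients.

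The first inequality, $\mathcal{L}_{\hat{\mathcal{D}}_g}(\hat{h}) \leq \frac{1}{K}\sum_{k=1}^K \mathcal{L}_{\hat{\mathcal{D}}_g}(\hat{h}_k, \hat{h}^*)$, would follow from the convexity of the loss induced by the one-hot label representation. Since $\hat{h}(\textbf{x}) = \frac{1}{K}\sum_k \hat{h}_k(\textbf{x})$, Jensen's inequality gives $\|\hat{h}(\textbf{x}) - \hat{h}^*(\textbf{x})\| \leq \frac{1}{K}\sum_k \|\hat{h}_k(\textbf{x}) - \hat{h}^*(\textbf{x})\|$, and taking expectations over $\hat{\mathcal{D}}_g$ closes this step. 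For the second step, I would invoke the classical Ben-David et al.\ domain adaptation bound with respect to the hypothesis class $\mathcal{G}_k$, applied separately to the two source distributions $\mathcal{D}_k$ and $\mathcal{D}_{\text{syn}}$. Each source individually yields $\mathcal{L}_{\hat{\mathcal{D}}_g}(\hat{h}_k, \hat{h}^*) \leq \mathcal{L}_{\mathcal{D}}(\hat{h}_k, \hat{h}^*) + d_{\mathcal{G}_k}(\mathcal{D}, \hat{\mathcal{D}}_g) + \lambda$ for $\mathcal{D} \in \{\mathcal{D}_k, \mathcal{D}_{\text{syn}}\}$; taking an $\alpha$ versus $1-\alpha$ convex combination and identifying $\lambda$ with $\lambda_k$ and $\lambda_{k,\text{syn}}$ respectively produces the two bracketed adaptation terms on the right-hand side of \eqref{bound}.

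For the third step, I would pass from the true risks $\alpha\mathcal{L}_{\mathcal{D}_k}(\hat{h}_k,\hat{h}^*) + (1-\alpha)\mathcal{L}_{\mathcal{D}_{\text{syn}}}(\hat{h}_k,\hat{h}^*)$ to the empirical weighted loss $\mathcal{L}_{\hat{\mathcal{D}}_k \cup \hat{\mathcal{D}}_{\text{syn}}}(\hat{h}_k)$. Because the per-sample losses drawn from $\hat{\mathcal{D}}_k$ and $\hat{\mathcal{D}}_{\text{syn}}$ are independent and bounded in $[0,1]$, the weighted empirical average concentrates around its mean with variance proxy $\alpha^2/m_k + (1-\alpha)^2/m_{\text{syn}}$; one application of Hoeffding's inequality then yields the numerical constraint term $\sqrt{-\frac{1}{2}(\alpha^2/m_k + (1-\alpha)^2/m_{\text{syn}})\log(\delta/(2K))}$ with per-client failure probability $\delta/(2K)$, where the factor $2K$ absorbs both the one-sided tail and a union bound across the $K$ clients. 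Summing the per-client inequalities and dividing by $K$ would deliver \eqref{bound} with overall probability at least $1-\delta$.

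The main obstacle is the Hoeffding step: one must treat the $m_k$ local and $m_{\text{syn}}$ synthetic samples as independent bounded summands carrying weights $\alpha/m_k$ and $(1-\alpha)/m_{\text{syn}}$, so that the resulting variance proxy is $\alpha^2/m_k + (1-\alpha)^2/m_{\text{syn}}$ rather than a naive $1/(m_k+m_{\text{syn}})$, and the union bound must be coordinated with the one-sided tail so the $\log(\delta/(2K))$ factor comes out exactly as stated. Once this weighted concentration bound is in hand, the rest of the argument is a mechanical assembly of standard domain-adaptation pieces, and the definitions of $\lambda_k$, $\lambda_{k,\text{syn}}$, and $d_{\mathcal{G}_k}$ slot directly into the telescoped inequality.
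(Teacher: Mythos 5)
Your proposal is correct and follows essentially the same route as the paper's proof: Jensen's inequality for the ensemble step, a Ben-David-style decomposition (which the paper derives explicitly via two triangle inequalities and a lemma bounding $|\mathcal{L}_{\mathcal{D}}(\hat{h},\hat{h}') - \mathcal{L}_{\mathcal{D}'}(\hat{h},\hat{h}')|$ by $d_{\mathcal{G}}(\mathcal{D},\mathcal{D}')$, rather than citing it) yielding the $\alpha$-weighted adaptation terms, and a weighted Hoeffding bound with variance proxy $\alpha^2/m_k + (1-\alpha)^2/m_{\text{syn}}$ combined with a union bound over the $K$ clients. The key technical point you flag — that the samples must carry weights $\alpha/m_k$ and $(1-\alpha)/m_{\text{syn}}$ so the variance proxy is not the naive $1/(m_k+m_{\text{syn}})$ — is exactly how the paper's concentration lemma is set up.
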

In Theorem \ref{theorem_convergence}, the first term in the RHS of \eqref{bound} represents the empirical loss over the local training dataset $\hat{\mathcal{D}}_k$ and the proxy dataset $\hat{\mathcal{D}}_{syn}$, and the second term is a numerical constraint indicating that a larger size of proxy dataset benefits the generalization performance.
Note that when there is no synthetic dataset shared among clients, the hyperparameter $\alpha=1$ and the generalization bound in \eqref{bound} reduces to that of FedAvg.
The results in Theorem \ref{theorem_convergence} provides two key insights. Firstly, the generalization error bound is significantly affected by the $\mathcal{G}$-divergence between the local distribution ${\mathcal{D}}_k$ and the test distribution ${\hat{\mathcal{D}}}_g$, which is also the cause of the non-IID problem in FL.
Secondly, since our proposed synthetic data carry the class-relevant features as the real data and thus the synthetic dataset ${\mathcal{D}}_{\text{syn}}$ can reach a smaller data distribution gap with the global dataset $\hat{\mathcal{D}}_g$ compared to the local dataset ${\mathcal{D}}_k$.
Therefore, we have $\lambda_{k,\text{syn}} < \lambda_k$ and $d_{\mathcal{G}_k} ({\mathcal{D}}_{\text{syn}}, \hat{\mathcal{D}}_g) <  d_{\mathcal{G}_k} (\mathcal{D}_k, \hat{\mathcal{D}}_g)$.
Consequently, the presence of our proposed data synthetic data reduces the generalization error bound with $\alpha \in [0,1)$ compared with FedAvg, in which there is no synthetic data and $\alpha=1.0$.

% compared with the setting where there is no proxy dataset and $\alpha=1$, the participation of the proxy dataset can reduce the last two terms with $\alpha \in [0,1)$, if the proxy dataset ${\mathcal{D}}_{syn}$ can approach the test dataset ${\mathcal{D}}_g$ much closer than the local dataset ${\mathcal{D}}_k$.

\begin{table}[]
    \centering
    \caption{Experimental setup}
    \begin{tabular}{cc}
    \hline
     Parameters    & CIFAR-10/CIFAR-100  \\ \hline
      Batch size $B$   &  10  \\
      Synthetic dataset size $B_g$ of each client  &  100  \\
      Local epochs  & 1  \\
      Learning rate $\eta$ & 0.005  \\
      Momentum for local update  & 0.9  \\
      Weight decay  &$5 \times 10^{-4}$ \\
      Data synthesis steps $\mathcal{T}_g$ & 500 \\
      Data synthesis duration $T_d$  & 20 \\ 
      Hyperparameter $\alpha$ & 0.1  \\
      Scaling factor $\mu$   & 0.5 \\
      Momentum factor $\lambda$ & 0.5 \\
      \# of trials & 3\\
      \hline
    \end{tabular}
    \label{tab:setup}
\end{table}

\section{Performance evaluation}
\label{experiments}

In this section, we compare HFMDS-FL with baseline algorithms on the CIFAR-10 \cite{cifar10} and CIFAR-100 \cite{cifar10} datasets to demonstrate its advantages in solving the non-IID problem, preserving privacy, and saving computational costs. We also conduct ablation studies to investigate the effect of objective functions, hyperparameters, client numbers, and local epochs.
% are conducted to demonstrate the generalizability of HFMDS-FL.

\textbf{Baselines:}
Apart from FedAvg \cite{fl}, we compare the proposed algorithms with benchmarking FL algorithms specialized for solving the non-IID problem, including FedProx \cite{fedprox}, MOON \cite{moon}, and FedGen \cite{fedgen}. Meanwhile, we also adopt two popular data augmentation algorithms for FL, including FedMix \cite{fedmix} and FedGAN \cite{fedgan} as baselines.
For fair comparisons, we do not include any differential privacy noise in the models of FedGAN.
The synthetic data in FedMix are averaged by two real samples.
To illustrate the effectiveness of our proposed hard feature augmentation method, we also simulate the performance of FMDS-FL without hard feature augmentation, whose objective function is $\mathcal{L}_{\text{FM}}+\mathcal{L}_C$.

% \begin{table}[h]
%     \centering
%     \caption{Experimental setup}
%     \begin{tabular}{ccc}
%     \hline
%      Parameters    & CIFAR-10 & CIFAR-100  \\ \hline
%       Batch size   &  10 & 10 \\
%       Local epochs & 1 & 1 \\
%       Learning rate & 0.005 & 0.005 \\
%       Momentum  & 0.9 & 0.9 \\
%       Weight decay  & $5 \times 10^{-4}$ &$5 \times 10^{-4}$ \\
%       Data synthesis steps & 200 & 500 \\
%       Data synthesis duration $T_d$ & 20 & 20 \\ 
%       Hyperparameter $\alpha$ & 0.1 & 0.1  \\
%       Scaling factor $\mu$ & 0.5  & 0.5 \\
%       Momentum factor $\lambda$ & 0.5 & 0.5 \\
%       \# of trials & 3 & 3\\
%       \hline
%     \end{tabular}
%     \label{tab:setup}
% \end{table}

\textbf{Datasets:}
We evaluate the algorithms on two benchmark classification datasets, i.e., CIFAR-10 \cite{cifar10} and CIFAR-100 \cite{cifar10}. There are ten classes of data and one hundred classes of data on CIFAR-10 and CIFAR-100, respectively, each of which comprises 50,000 training data samples and 10,000 testing data samples.
To simulate the non-IID scenarios, we adopt both label-skewed and Dirichlet data distribution among 20 clients.
For label-skewed distribution scenarios, the training data are randomly assigned to clients, and each of them has some classes of data.
% Besides, the parameters of Dirichlet distribution for controlling the data heterogeneity are set to be $\text{Dir}=0.01$ and $0.05$ for both datasets.
Besides, we incorporate the Dirichlet distribution \cite{fl_experiments} to account for varying degrees of data heterogeneity, utilizing values of $\text{Dir}=0.01$ and $\text{Dir}=0.05$ to denote severe and mild heterogeneity, respectively.
The detailed experimental setup is summarized in Table \ref{tab:setup}.

\textbf{Model architecture:}
We adopt a convolutional neural network (CNN) model, which contains two $5 \times 5$ convolutional layers followed by $2 \times 2$ max pooling layers and two fully connected layers with the latent dimensions of $1,600$ and $512$, respectively.
We view the last fully connected layer as the classifier and the remaining layers as the feature extractor.

\textbf{Hyperparameters:}
For local training, we employ mini-batch SGD as the local optimizer, with a batch size of $10$, learning rate of $0.005$, momentum of $0.9$, and weight decay of $5\times 10^{-4}$.
In HFMDS-FL, we set the hyperparameter $\alpha=0.1$ for local training, $\mu=0.5$ for hard feature augmentation, and $\lambda=0.5$ for prototype update.
Each client generates $B_g=100$ synthetic data and optimizes them using an Adam optimizer with a learning rate $0.02$ in $\mathcal{T}_g=500$ steps.
% We conduct a sensitive analysis on $\mu$ to investigate its impact on data synthesis.

\subsection{Performance Comparison}

\textbf{Accuracy performance:}
Table \ref{main_results} summarizes the test accuracy of FMDS-FL, HFMDS-FL, and baselines on CIFAR-10 and CIFAR-100 datasets.
Both FMDS-FL and HFMDS-FL outperform the baselines in all the non-IID scenarios, surpassing FedAvg by $7.52\%$ and $2.34\%$ on CIFAR-10 and CIFAR-100 when $\text{Dir}=0.01$, respectively, which demonstrates the effectiveness of synthetic data generated by our proposed FMDS and HFMDS methods.
FedProx regulates the local training using parameter alignment while it is inferior to FedAvg in these highly skewed non-IID scenarios.
To alleviate data heterogeneity, MOON utilizes contrastive learning between the global model and the local model, and FedGen adopts feature augmentation with a feature generator.
However, they achieve similar or slightly better accuracy than FedAvg in most non-IID settings.
Besides, FedMix suffers from an accuracy degradation in most non-IID scenarios compared with FedAvg due to the mismatched mapping between the mixup samples and mixup labels.
FedGAN outperforms FedAvg on CIFAR-10 but achieves an unsatisfactory performance on CIFAR-100.
This is because the generative models for CIFAR-100 are difficult to train and insufficient training leads to unqualified synthetic data.
Besides, compared with FMDS-FL, HFMDS-FL achieves better accuracy in all non-IID cases with $\mu=0.5$ on CIFAR-10 and CIFAR-100 datasets, which evidences that the augmented hard features are able to move towards and smooth the decision boundary during the training process.
We will analyze how $\mu$ affects the accuracy in ablation studies to illustrate the effectiveness of augmented hard features.

\begin{table*}[]
\centering
\caption{Accuracy ($\%$) of varying algorithms. Each experiment is repeated in three trials. The results \textbf{in bold} indicate the best performance and the second best results are \underline{underlined}.}
\begin{tabular}{ccccccc}
\cmidrule(lr){1-7}
{\multirow{2}{*}{Algorithms}} & \multicolumn{3}{c}{CIFAR-10}   & \multicolumn{3}{c}{CIFAR-100}                                                                     \\ \cmidrule{2-4} \cmidrule(lr){5-7}
                           & \multicolumn{1}{c}{Dir=0.01} & \multicolumn{1}{c}{Dir=0.05} & \multicolumn{1}{c}{2 class/client} & \multicolumn{1}{c}{Dir=0.01} & \multicolumn{1}{c}{Dir=0.05} & \multicolumn{1}{c}{10 class/client}  \\ \cmidrule(lr){1-7}
\multicolumn{1}{c}{FedAvg \cite{fl}}                      & \multicolumn{1}{c}{28.53 $\pm$ 6.74}    & \multicolumn{1}{c}{45.31 $\pm$ 1.64}   & \multicolumn{1}{c}{50.44 $\pm$ 2.70}          & \multicolumn{1}{c}{27.14 $\pm$ 0.46}    & \multicolumn{1}{c}{30.51 $\pm$ 0.24}   & \multicolumn{1}{c}{28.16 $\pm$ 0.24}         \\ 
\multicolumn{1}{c}{FedProx \cite{fedprox}}                     & \multicolumn{1}{c}{27.68 $\pm$ 5.31}    & \multicolumn{1}{c}{42.57 $\pm$ 2.10}   & \multicolumn{1}{c}{46.36 $\pm$ 2.80}          & \multicolumn{1}{c}{25.64 $\pm$ 0.62}    & \multicolumn{1}{c}{29.08 $\pm$ 0.32}   & \multicolumn{1}{c}{26.99 $\pm$ 0.42}        \\ 
\multicolumn{1}{c}{MOON \cite{moon}}                        & \multicolumn{1}{c}{28.20 $\pm$ 7.18}    & \multicolumn{1}{c}{44.81 $\pm$ 1.80}   & \multicolumn{1}{c}{50.24 $\pm$ 2.80}          & \multicolumn{1}{c}{27.20 $\pm$ 0.52}    & \multicolumn{1}{c}{30.67 $\pm$ 0.26}   & \multicolumn{1}{c}{28.10 $\pm$ 0.20}          \\ 
\multicolumn{1}{c}{FedGen \cite{fedgen}}                      & \multicolumn{1}{c}{28.33 $\pm$ 7.45}    & \multicolumn{1}{c}{45.66 $\pm$ 1.59}   & \multicolumn{1}{c}{51.33 $\pm$ 1.72}          & \multicolumn{1}{c}{26.80 $\pm$ 0.74}    & \multicolumn{1}{c}{30.48 $\pm$ 0.48}   & \multicolumn{1}{c}{28.78 $\pm$ 0.22}         \\ 
\multicolumn{1}{c}{FedMix \cite{fedmix}}                      & \multicolumn{1}{c}{26.27 $\pm$ 2.41}    & \multicolumn{1}{c}{45.11 $\pm$ 1.57}   & \multicolumn{1}{c}{51.30 $\pm$ 3.54}          & \multicolumn{1}{c}{26.80 $\pm$ 0.58}    & \multicolumn{1}{c}{30.30 $\pm$ 0.43}   & \multicolumn{1}{c}{28.15 $\pm$ 0.25}        \\ 
\multicolumn{1}{c}{FedGAN \cite{fedgan}}                      & \multicolumn{1}{c}{32.75 $\pm$ 5.41}    & \multicolumn{1}{c}{45.93 $\pm$ 1.14}   & \multicolumn{1}{c}{52.54 $\pm$ 0.49}          & \multicolumn{1}{c}{26.82 $\pm$ 0.74}    & \multicolumn{1}{c}{30.00 $\pm$ 0.07}   & \multicolumn{1}{c}{27.87 $\pm$ 0.13}        \\ 
\multicolumn{1}{c}{FMDS-FL}                     & \multicolumn{1}{c}{\underline{{33.67} $\pm$ {3.49}}}    & \multicolumn{1}{c}{\underline{{45.95} $\pm$ {0.99}}}   & \multicolumn{1}{c}{\underline{52.69 $\pm$ 0.58}}          & \multicolumn{1}{c}{\underline{28.82 $\pm$ 0.76}}    & \multicolumn{1}{c}{\underline{31.22 $\pm$ 0.39}}   & \multicolumn{1}{c}{\underline{{29.81} $\pm$ {0.19}}}       \\ 
\multicolumn{1}{c}{HFMDS-FL}                    & \multicolumn{1}{c}{$\bm{36.05} \pm \bm{3.08}$}    & \multicolumn{1}{c}{{$\bm{46.73} \pm \bm{1.70}$}}   & \multicolumn{1}{c}{$\bm{53.74}$ $\pm$ $\bm{0.26}$}          & \multicolumn{1}{c}{$\bm{29.48} \pm \bm{0.79}$}    & \multicolumn{1}{c}{$\bm{31.54} \pm \bm{0.32}$}   & \multicolumn{1}{c}{$\bm{29.92} \pm \bm{0.36}$}       \\ \cline{1-7}
\end{tabular}
\label{main_results}
\end{table*}

\begin{table*}[]
\centering
\caption{PSNR (dB) of synthetic data and training computational costs (GFLOPs) of varying algorithms. The results \textbf{in bold} indicate the best performance in privacy and computational costs and the second best results are \underline{underlined}.}
\begin{tabular}{ccccccccc}
\cmidrule(lr){1-9}
  \multicolumn{1}{c}{\multirow{3}{*}{Algorithms}}                          & \multicolumn{3}{c}{PSNR (dB)}                                                                                       & \multicolumn{1}{c}{\multirow{3}{*}{\makecell{Computational costs \\ (GFLOPs)}}} & \multicolumn{3}{c}{PSNR (dB)}                                                                           & \multicolumn{1}{c}{\multirow{3}{*}{\makecell{Computational costs \\(GFLOPs)}}}   \\ \cmidrule(lr){2-4} \cmidrule(lr){6-8}
 & \multicolumn{3}{c}{CIFAR-10}                                                                                   & \multicolumn{1}{c}{}                                              & \multicolumn{3}{c}{CIFAR-100}                                                                      & \multicolumn{1}{c}{}                                            \\ \cmidrule(lr){2-4} \cmidrule(lr){6-8}
\multicolumn{1}{c}{}                            & \multicolumn{1}{c}{Dir=0.01}       & \multicolumn{1}{c}{Dir=0.05} & \multicolumn{1}{c}{2 class/client}        & \multicolumn{1}{c}{}                                              & \multicolumn{1}{c}{Dir=0.01} & \multicolumn{1}{c}{Dir=0.05} & \multicolumn{1}{c}{10 class/client} & \multicolumn{1}{c}{}                                             \\ \cmidrule(lr){1-9}
\multicolumn{1}{c}{FedMix}                      & \multicolumn{1}{c}{21.13}          & \multicolumn{1}{c}{21.86}          & \multicolumn{1}{c}{21.17}          & \multicolumn{1}{c}{91.06}                                         & \multicolumn{1}{c}{15.78}    & \multicolumn{1}{c}{15.84}   & \multicolumn{1}{c}{15.74}           & \multicolumn{1}{c}{91.56}                                      \\ 
\multicolumn{1}{c}{FedGAN}                      & \multicolumn{1}{c}{17.89}          & \multicolumn{1}{c}{18.48}          & \multicolumn{1}{c}{18.50}          & \multicolumn{1}{c}{135.14}                                        & \multicolumn{1}{c}{17.30}    & \multicolumn{1}{c}{17.49}   & \multicolumn{1}{c}{17.39}           & \multicolumn{1}{c}{135.38}                                       \\ 
\multicolumn{1}{c}{FMDS-FL}                     & \multicolumn{1}{c}{\underline{16.98}}          & \multicolumn{1}{c}{\underline{17.32}}          & \multicolumn{1}{c}{\underline{16.69}}          & \multicolumn{1}{c}{$\bm{69.32}$}                                         & \multicolumn{1}{c}{\underline{12.41}}    & \multicolumn{1}{c}{\underline{10.96}}   & \multicolumn{1}{c}{\underline{13.38}}           & \multicolumn{1}{c}{$\bm{69.56}$}                               \\ 
\multicolumn{1}{c}{HFMDS-FL}                    & \multicolumn{1}{c}{$\bm{15.59}$} & \multicolumn{1}{c}{$\bm{16.53}$} & \multicolumn{1}{c}{$\bm{16.60}$} & \multicolumn{1}{c}{\underline{69.33}}                                         & \multicolumn{1}{c}{$\bm{12.36}$}    & \multicolumn{1}{c}{$\bm{10.33}$}   & \multicolumn{1}{c}{$\bm{13.15}$}           & \multicolumn{1}{c}{\underline{69.57}}                     \\ \cmidrule(lr){1-9}                             
\end{tabular}
\label{privacy_cost}
\end{table*}

\textbf{Privacy and computational cost performance:}
Following \cite{gradient_inversion}, we introduce the Peak Signal-to-Noise Ratio (PSNR) to quantitatively evaluate the privacy protection of the synthetic data, which is defined as $\text{PSNR}=10\log_{10}(\frac{\text{MAX}_\text{I}^2}{\text{MSE}})$, with $\text{MAX}_\text{I}$ as the maximum possible pixel value and $\text{MSE}$ as the mean square error between the synthetic sample and real sample. 
A larger PSNR value means higher similarity between the synthetic samples and the original samples, indicating more severe privacy leakage.

We calculate the average PSNR values of all the synthetic samples and the average computational costs for each client in each communication round for the data-augmented FL algorithms, including FedMix, FedGAN, FMDS-FL, and HFMDS-FL, and summarize the results in Table \ref{privacy_cost}. 
The synthetic data generated by our proposed FMDS achieves lower PSNR values than FedMix and FedGAN (e.g., with decreasing PSNR values of $4.88$ dB and $6.53$ dB compared with FedMix and FedGAN on CIFAR-100 with $\text{Dir}=0.05$, respectively), demonstrating that our proposed FMDS method provides better privacy protection for the local data.
In addition, our proposed HFMDS-FL further decreases the PSNR values and protects the privacy of real samples better.
This is because the real features are augmented towards a hard transformation direction and thus more real information is erased.

In addition, as shown in Table \ref{privacy_cost}, compared with FedMix and FedGAN, our proposed FMDS-FL and HFMDS-FL require lower computational costs for data synthesis and local training.
This is because, in FedMix, the second-order based regularization term used for data augmentation during local training significantly increases the computational costs.
Moreover, FedGAN requires an extremely long time to train the generators and discriminators locally, thus requiring the most computational costs among these data synthesis methods, nearly twice more than HFMDS-FL.
Instead of training a network, our proposed FMDS-FL and HFMDS-FL utilize a one-on-one feature-matching objective function to optimize the synthetic samples directly, thus substantially decreasing the training costs compared with the GAN-based data-augmented methods.

\textbf{Convergence performance:}
Fig. \ref{fig:convergence} shows the convergence performance of FMDS-FL, HFMDS-FL, and baselines on CIFAR-10 and CIFAR-100 datasets.
FMDS-FL and HFMDS-FL achieve significantly faster convergence rates compared with the baselines.
Although FedGAN converges faster than other baselines by incorporating the synthetic data at the beginning,
our proposed FMDS-FL and HFMDS-FL algorithms achieve faster convergence rates after the 20-th communication round.
This is because the data synthesis process is conducted every 20 communication rounds in these settings and thus the synthetic data are used for regulating the local training from the 20-th communication round, which indicates that the synthetic data generated by FMDS-FL and HFMDS-FL have remarkably higher quality than the synthetic data of other baselines and thus improve the performance when they are used for training.

\begin{figure*}
    \centering
    \includegraphics[width=0.9\textwidth]{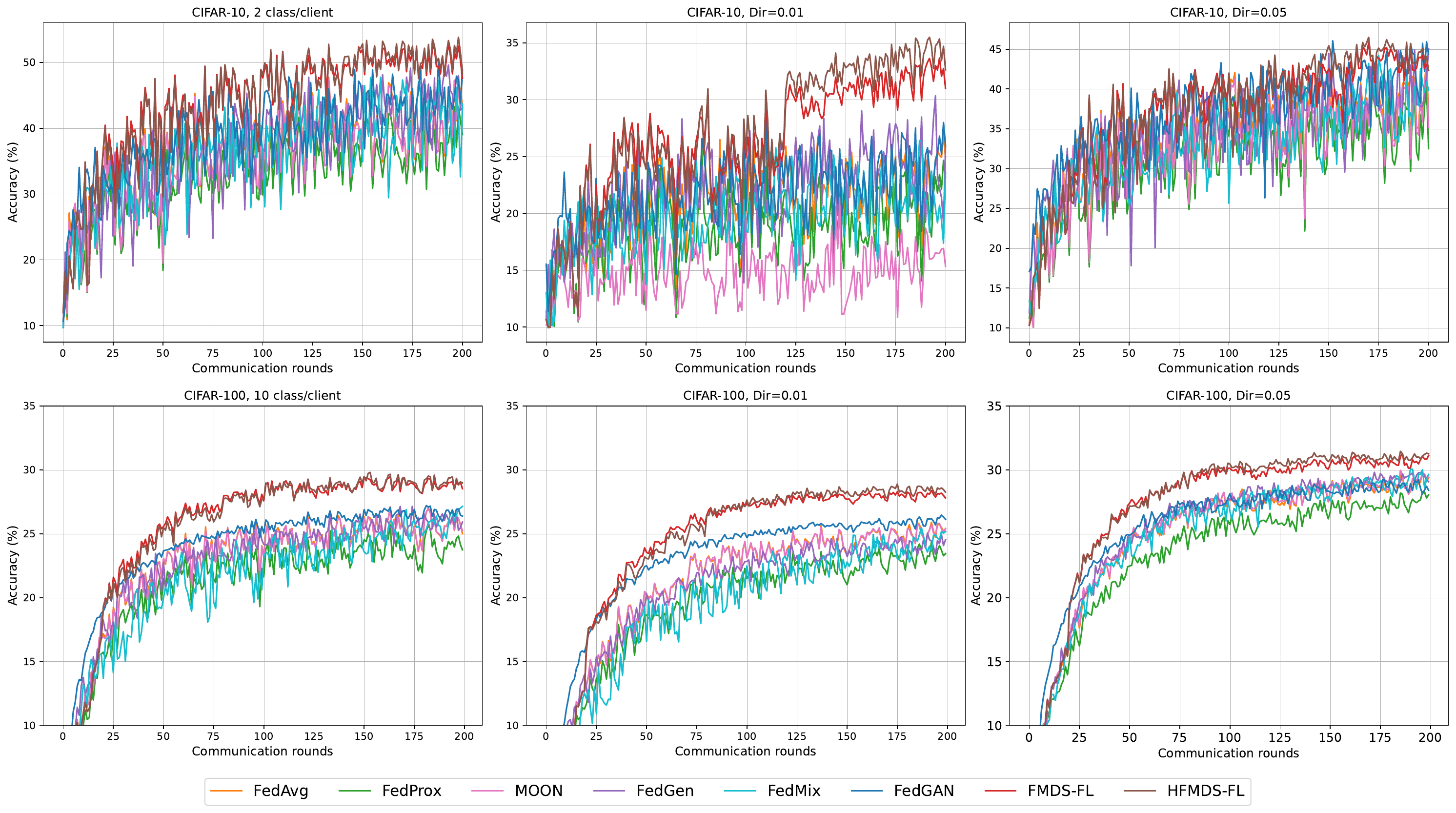}
    \caption{Convergence performance of varying algorithms on CIFAR-10 and CIFAR-100.}
    \label{fig:convergence}
\end{figure*}

\begin{table}[h]
    \centering
    \caption{Accuracy ($\%$) of FMDS-FL and HFMDS-FL with varying loss terms for data synthesis on CIFAR-10 and CIFAR-100 with 2 classes and 10 classes of data at each client, respectively. $\mathcal{L}_{\text{EFM}}$ represents the easy feature matching loss with easy feature transformation direction with $\mu=-0.5$, and $\mu=0.5$ is adopted for hard feature matching loss function $\mathcal{L}_{\text{HFM}}$}
    \begin{tabular}{ccc}
    \cmidrule(lr){1-3}
       Objective functions  & CIFAR-10  & CIFAR-100  \\ \cmidrule(lr){1-3}
        $\mathcal{L}_C$ & 52.08 & 27.04 \\ 
        $\mathcal{L}_C+\mathcal{L}_{\text{EFM}}$ & 52.58 & 28.03 \\ 
        $\mathcal{L}_C+\mathcal{L}_{\text{FM}}$ (FMDS-FL) & 52.86 & 29.50 \\ 
        $\mathcal{L}_C+\mathcal{L}_{\text{HFM}}$ (HFMDS-FL) & $\bm{53.82}$ & $\bm{29.73}$ \\ \cmidrule(lr){1-3}
    \end{tabular}
    \label{tab:hyperparameter}
\end{table}

\begin{figure}[!t]
    \centering
    \includegraphics[width=0.35 \textwidth]{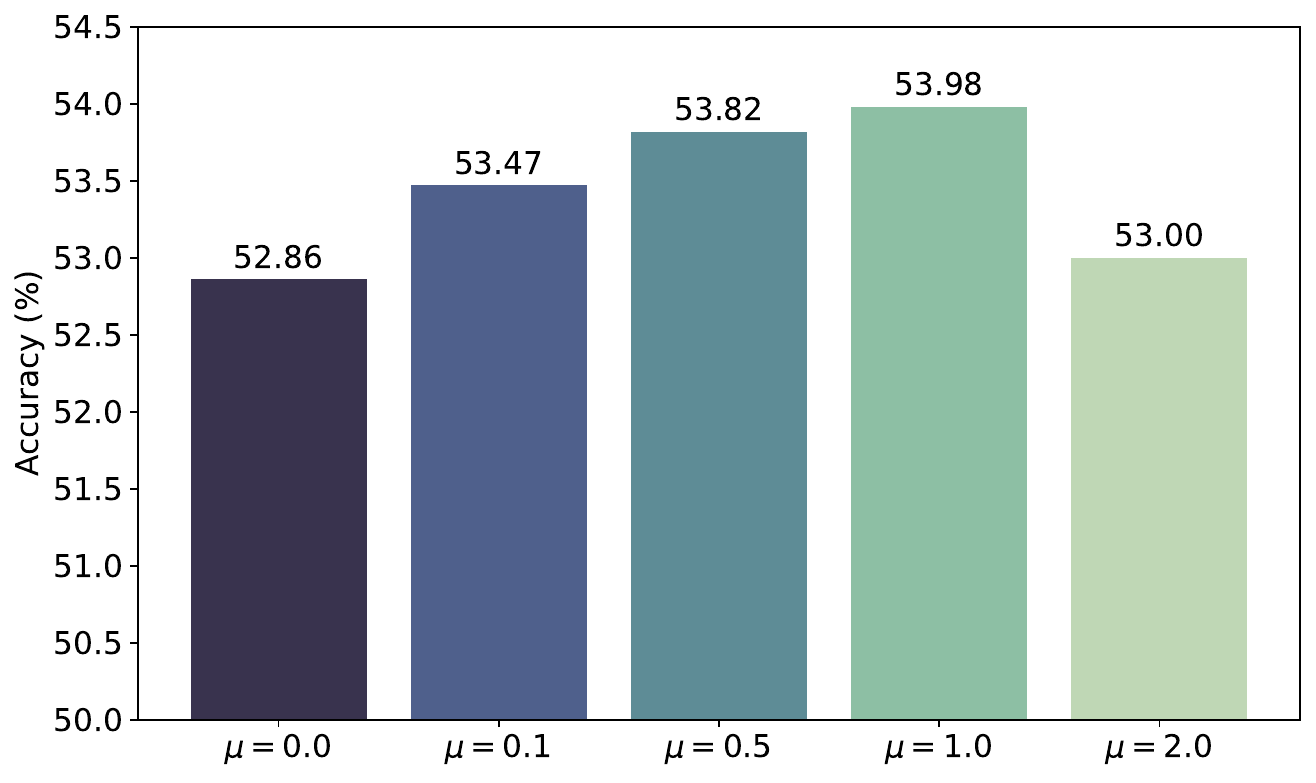}
    \caption{Accuracy ($\%$) of HFMDS-FL with varying values of $\mu$ on CIFAR-10 with 2 class of data at each client. $\mu=0.0$ stands for the FMDS-FL algorithm without the hard feature augmentation.}
    \label{fig:mu_cifar10}
\end{figure}

\begin{figure}[!t]
    \centering
    \includegraphics[width=0.35 \textwidth]{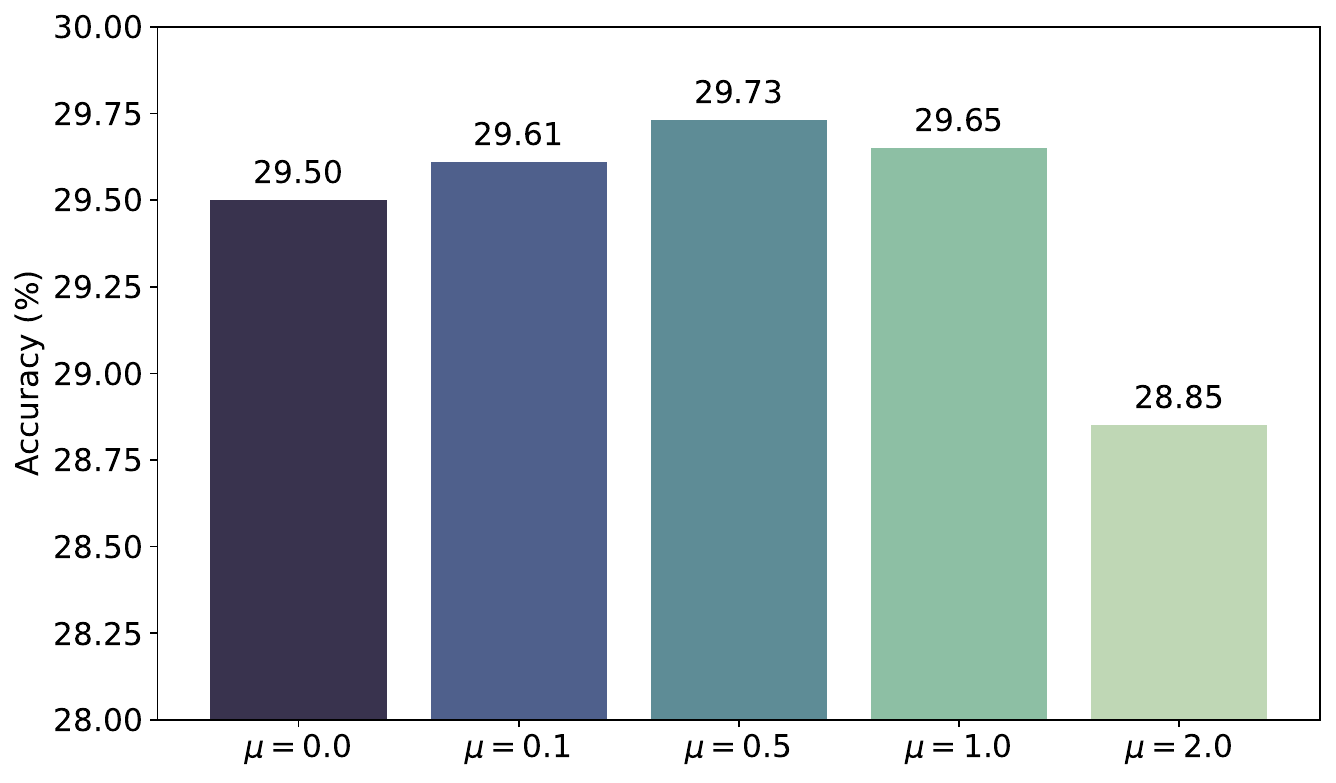}
    \caption{Accuracy ($\%$) of HFMDS-FL with varying values of $\mu$ on CIFAR-100 with 10 classes of data at each client. $\mu=0.0$ stands for the FMDS-FL algorithm without the hard feature augmentation.}
    \label{fig:mu_cifar100}
\end{figure}

\subsection{Ablation Studies}
\label{ablation study}

\textbf{Effectiveness of the loss functions for data synthesis:} To illustrate the effectiveness of the proposed objective functions in \eqref{overall_loss} for data synthesis, we test the accuracy of FMDS-FL and HFMDS-FL with various objective functions on the CIFAR-10 and CIFAR-100 datasets.
To further demonstrate the effectiveness of hard feature augmentation, we evaluate the accuracy of HFMDS-FL with easy feature augmentation (i.e., $\mu=-0.5$), which transfers the reals features towards the prototypes.
As the pairwise comparison shown in Table \ref{tab:hyperparameter}, by updating the objective functions progressively, FMDS-FL outperforms the algorithm with only the classification loss $\mathcal{L}_C$, which demonstrates the effectiveness of our proposed feature matching data synthesis method in generating the high-quality synthetic samples for mitigating the non-IID issue in FL.
% proves the benefits of our proposed feature matching data synthesis method.
Besides, the easy feature augmentation decreases the accuracy compared to FMDS-FL while HFMDS-FL improves the accuracy on both datasets, which proves the benefits of our proposed hard feature augmentation method.
% FMDS-FL and HFMDS-FL achieve the best accuracy, which proves the benefits of the proposed FMDS method and hard feature augmentation in generating the high-quality synthetic samples for mitigating the non-IID issue in FL. 

% In addition, to illustrate the effectiveness of the three proposed loss terms in , we test the performance of FeaSyn-FL with various objective functions on CIFAR-10 with each client has two classes of data. As the pairwise comparison shown in Table , with upgrading the objective functions progressively, FeaSyn-FL achieves the best accuracy, which proves the benefits of the three proposed loss terms in generating high-quality synthetic samples for mitigating the non-IID issue in FL. 

\textbf{Effectiveness of hard feature augmentation for data synthesis:}
We analyze how the value of $\mu$ affects the accuracy by setting various values of $\mu$ on CIFAR-10 and CIFAR-100 with two classes and ten classes of data in each client, respectively.
The results in Fig \ref{fig:mu_cifar10} and Fig \ref{fig:mu_cifar100} show that properly increasing the value of $\mu$ can improve the accuracy compared with FMDS-FL without the hard feature augmentation, which verifies that the hard feature towards decision boundary promotes the generalization of models and thereby improves the accuracy.
However, excessively increasing the values of $\mu$ leads to an accuracy decline (e.g., $0.65\%$ accuracy drops when increasing $\mu$ from 0 to 2.0). 
This is due to the fact that a large value of $\mu$ facilitates the synthetic samples to move across the decision boundary and entangle with other classes of data, thus generating unqualified synthetic samples.

% \begin{table}[]
%     \centering
%     \caption{Accuracy of HFMDS-FL with varying values of $\mu$ on CIFAR-10. $\mu=0.0$ stands for the algorithm FMDS-FL without the hard feature augmentation. $\mu=-0.5$ represents the setting that the features are augmented towards an easy transformation direction.}
%     \begin{tabular}{|c|c|c|c|c|c|c|}
%     \hline
%       $\mu$ & -0.5  & 0.0 & 0.1 & 0.5 & 1.0 & 2.0  \\ \hline
%        Accuracy ($\%$) & 57.05  & 57.82 & 57.88 & 57.92 & 58.06 & 57.69 \\ \hline 
%     \end{tabular}
%     \label{tab:mu}
% \end{table}

\begin{figure}[!t]
    \centering  
    \includegraphics[width=0.35 \textwidth]{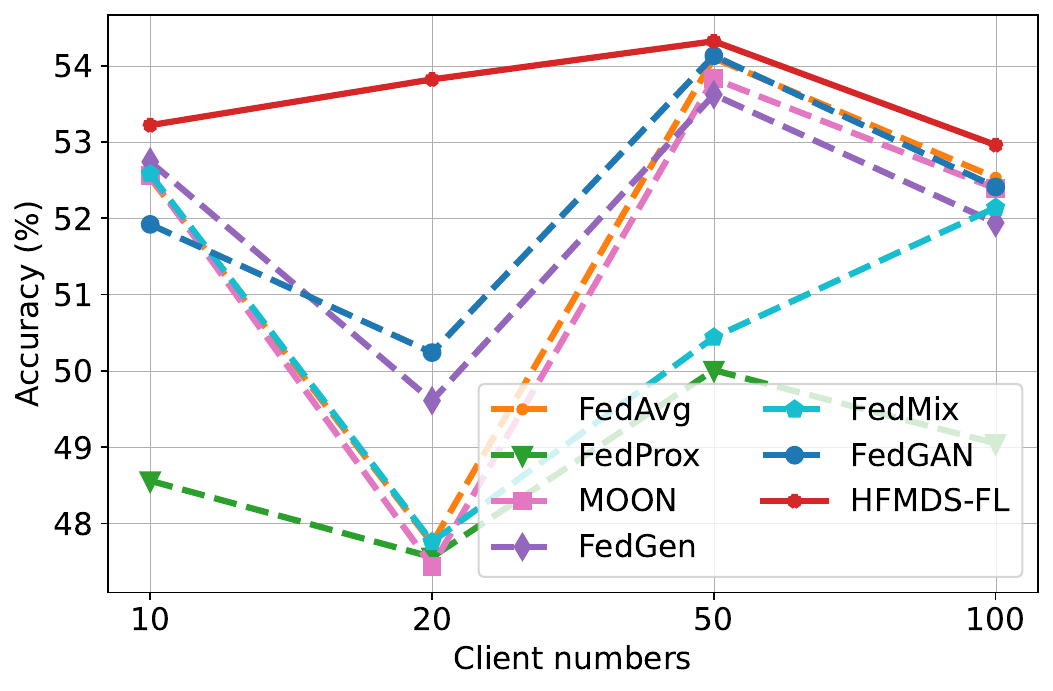}
    \caption{Accuracy ($\%$) of different algorithms with varying client numbers on CIFAR-10, and ten of them are randomly active in each communication round.}
    \label{fig:client_num_cifar10}
\end{figure}

\begin{figure}[!t]
    \centering
    \includegraphics[width=0.35 \textwidth]{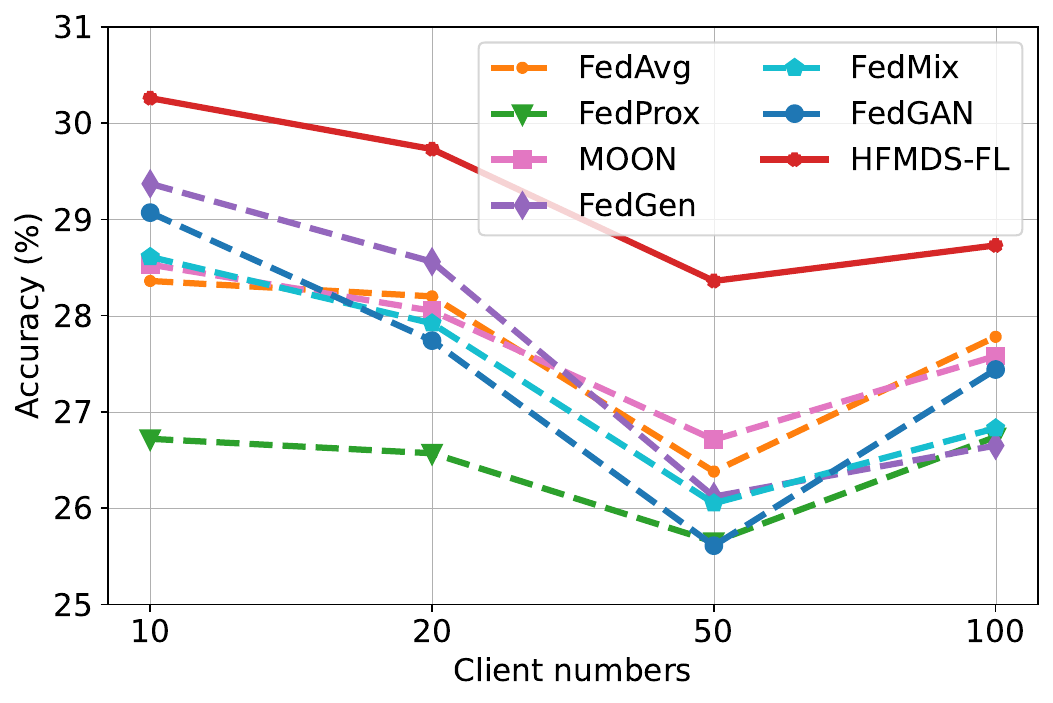}
    \caption{Accuracy ($\%$) of different algorithms with varying client numbers on CIFAR-100 and ten of them are randomly selected in each communication round.}
    \label{fig:client_num_cifar100}
\end{figure}

\textbf{Impacts of client numbers:}
We investigate the impact of client numbers on HFMDS-FL and the baselines on the CIFAR-10 and CIFAR-100 datasets.
Fig. \ref{fig:client_num_cifar10} and Fig. \ref{fig:client_num_cifar100} show that, with different client numbers, HFMDS-FL achieves the best performance in all the settings.
For example, HFMDS-FL outperforms FedGAN by 0.82$\%$ and 1.29$\%$ with 100 clients on the CIFAR-10 and CIFAR-100 datasets, respectively, which highlights the scalability and deployability of HFMDS-FL on real-world systems. 

\textbf{Impacts of local epoch numbers:}
We study how the number of local epochs affects the accuracy of all the methods on CIFAR-10 and CIFAR-100.
As depicted in Fig. \ref{fig:local_epoch_cifar10} and Fig. \ref{fig:local_epoch_cifar100}, HFMDS-FL is robust against all the baselines with different numbers of local epochs, particularly on the CIFAR-100 dataset, with at least 1.17$\%$ accuracy improvement over the second-best algorithm, demonstrating its reliability and generalization ability.

\section{Conclusions}
\label{conclusions}
In this paper, we addressed the data statistical heterogeneity challenge in FL by proposing a novel hard feature matching data synthesis (HFMDS) method based on model inversion techniques.
We optimized the synthetic data to be task-relevant and privacy-preserving by matching the class-relevant features of real data.
A hard feature augmentation mechanism that pushes the real features towards the decision boundary was further proposed to preserve data privacy while improving accuracy.
By integrating the proposed HFMDS with FL, we provided a novel FL framework named HFMDS-FL with data augmentation to solve the non-IID issue.
The theoretical analysis demonstrated that the generated synthetic data boost feature alignment across clients and lead to smaller generalization error bound in FL.
Furthermore, the extensive experiments showed that HFMDS-FL consistently outperforms the baselines in accuracy, privacy preservation, and computational costs.
For future work, it is worth investigating how to employ pre-trained generative models for data synthesis in FL to further improve the performance.

\begin{figure}[!t]
    \centering
    \includegraphics[width=0.35 \textwidth]{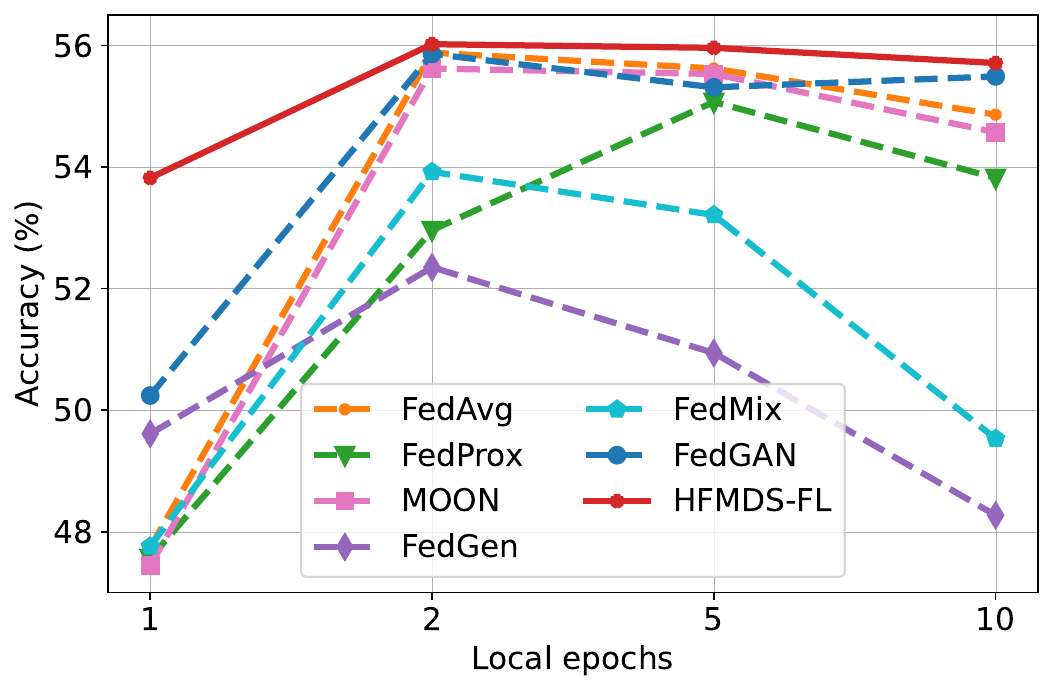}
    \caption{Accuracy ($\%$) of different algorithms with varying local epochs on CIFAR-10.}
    \label{fig:local_epoch_cifar10}
\end{figure}

\begin{figure}[!t]
    \centering
    \includegraphics[width=0.35 \textwidth]{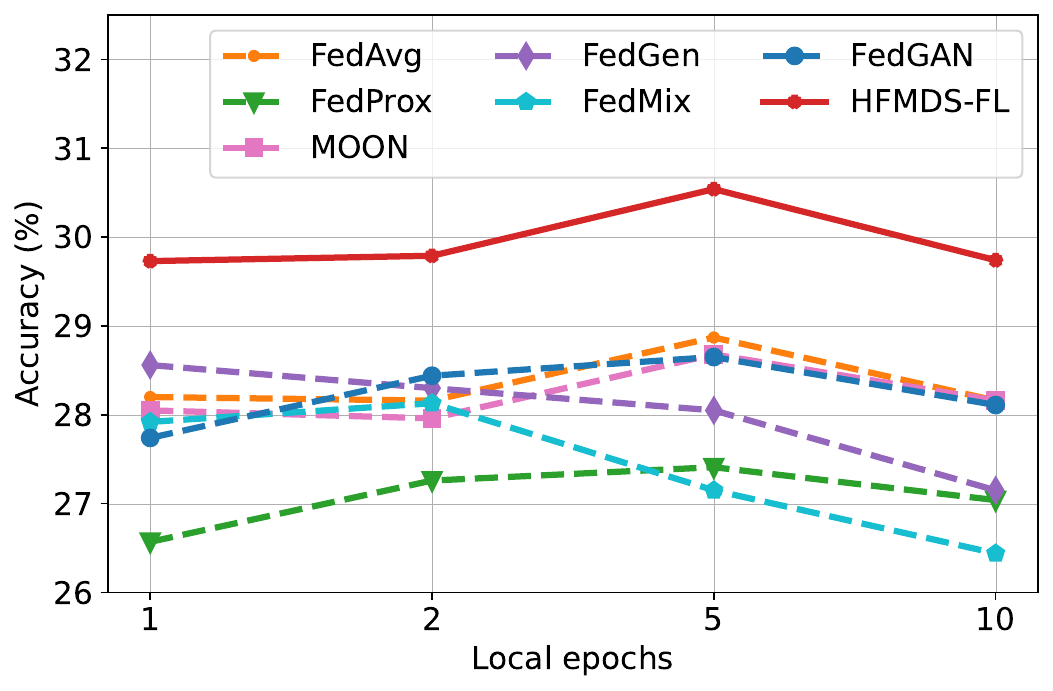}
    \caption{Accuracy ($\%$) of different algorithms with varying local epochs on CIFAR-100.}
    \label{fig:local_epoch_cifar100}
\end{figure}

\bibliographystyle{IEEEtran}
\bibliography{references}

\begin{IEEEbiography}[{\includegraphics[width=1in,height=1.25in,clip,keepaspectratio]{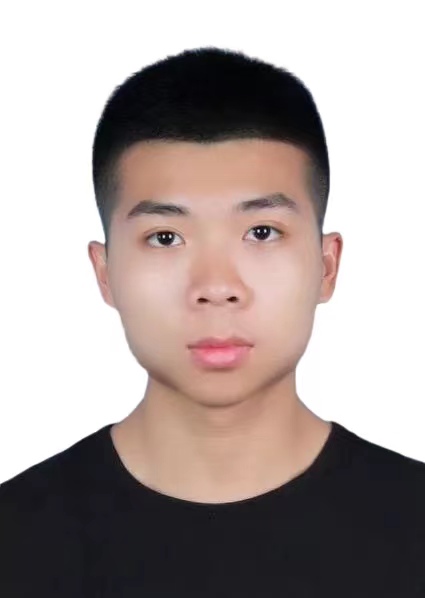}}]{Zijian Li}
(Graduate student member, IEEE) received the B.Eng. degree in Electrical Engineering and Automation from the South China University of Technology in 2020, and the M.Sc. degree in electronic and information engineering from the Hong Kong Polytechnic University in 2022.
He is currently pursuing a Ph.D. degree in the Department of Electronic and Computer Engineering at the Hong Kong University of Science and Technology.
His research interest is federated learning.
\end{IEEEbiography}
\vskip -2\baselineskip plus -1fil

\begin{IEEEbiography}[{\includegraphics[width=1in,height=1.25in,clip,keepaspectratio]{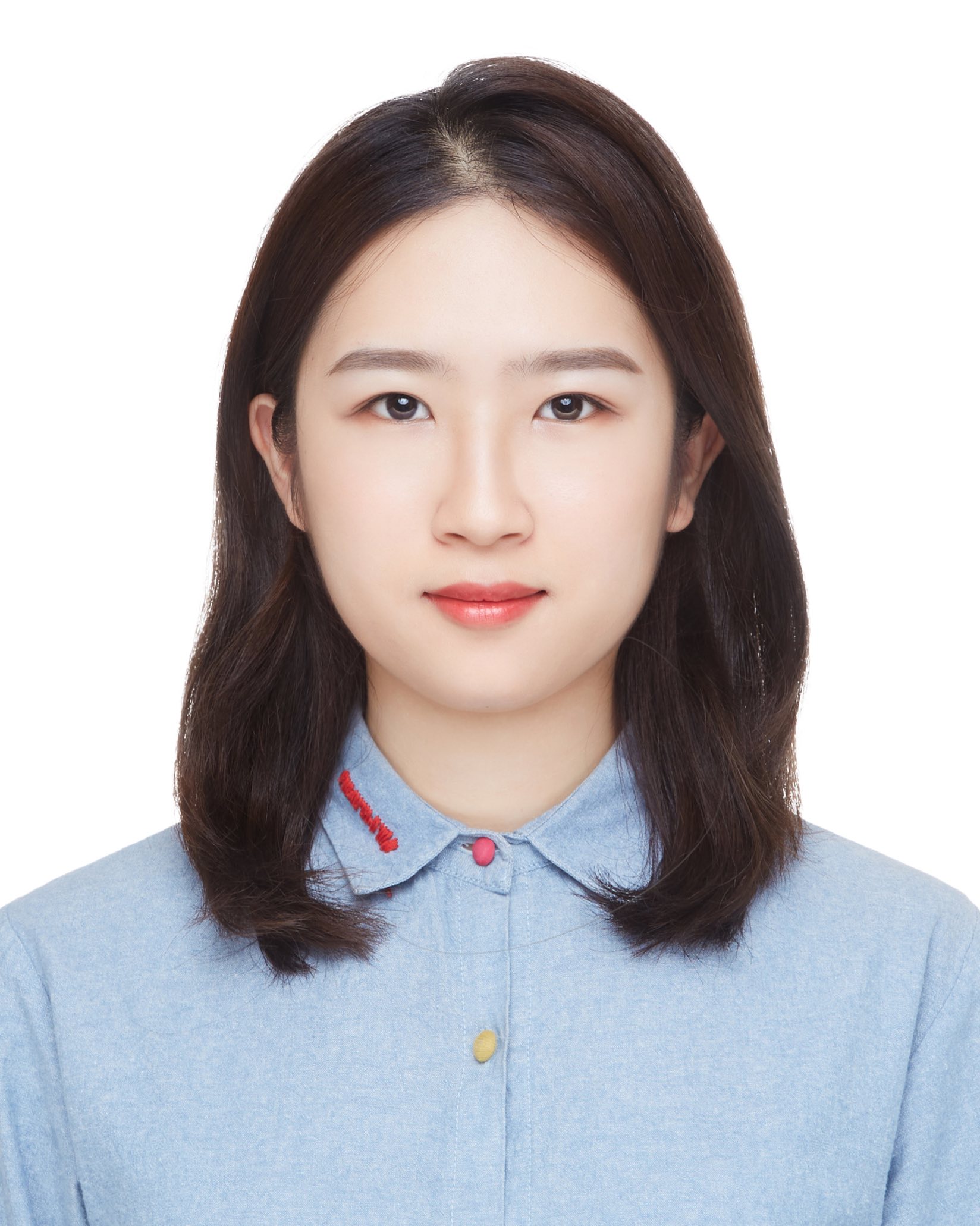}}]{Yuchang Sun}
(Graduate student member, IEEE) received the B.Eng. degree in electronic and information engineering from Beijing Institute of Technology in 2020. She is currently pursuing a Ph.D. degree at Hong Kong University of Science and Technology. Her research interests include federated learning and distributed optimization.
\end{IEEEbiography}
\vskip -2\baselineskip plus -1fil

\begin{IEEEbiography}[{\includegraphics[width=1in,height=1.25in,clip,keepaspectratio]{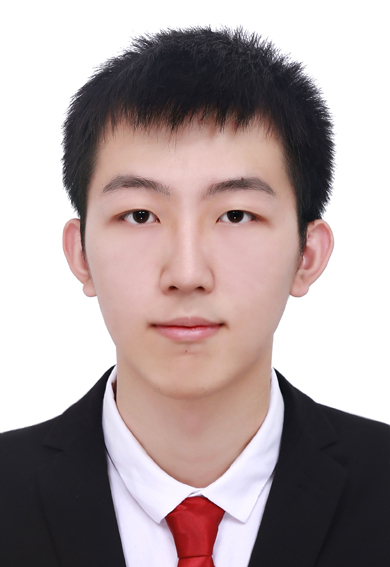}}]{Jiawei Shao}
(Graduate student member, IEEE) received the B.Eng. degree in telecommunication engineering from Beijing University of Posts and Telecommunications in 2019.
He is currently pursuing a Ph.D. degree in the Department of Electronic and Computer Engineering at the Hong Kong University of Science and Technology.
His research interests include edge intelligence and federated learning.
\end{IEEEbiography}
\vskip -2\baselineskip plus -1fil

\begin{IEEEbiography}[{\includegraphics[width=1in,height=1.25in,clip,keepaspectratio]{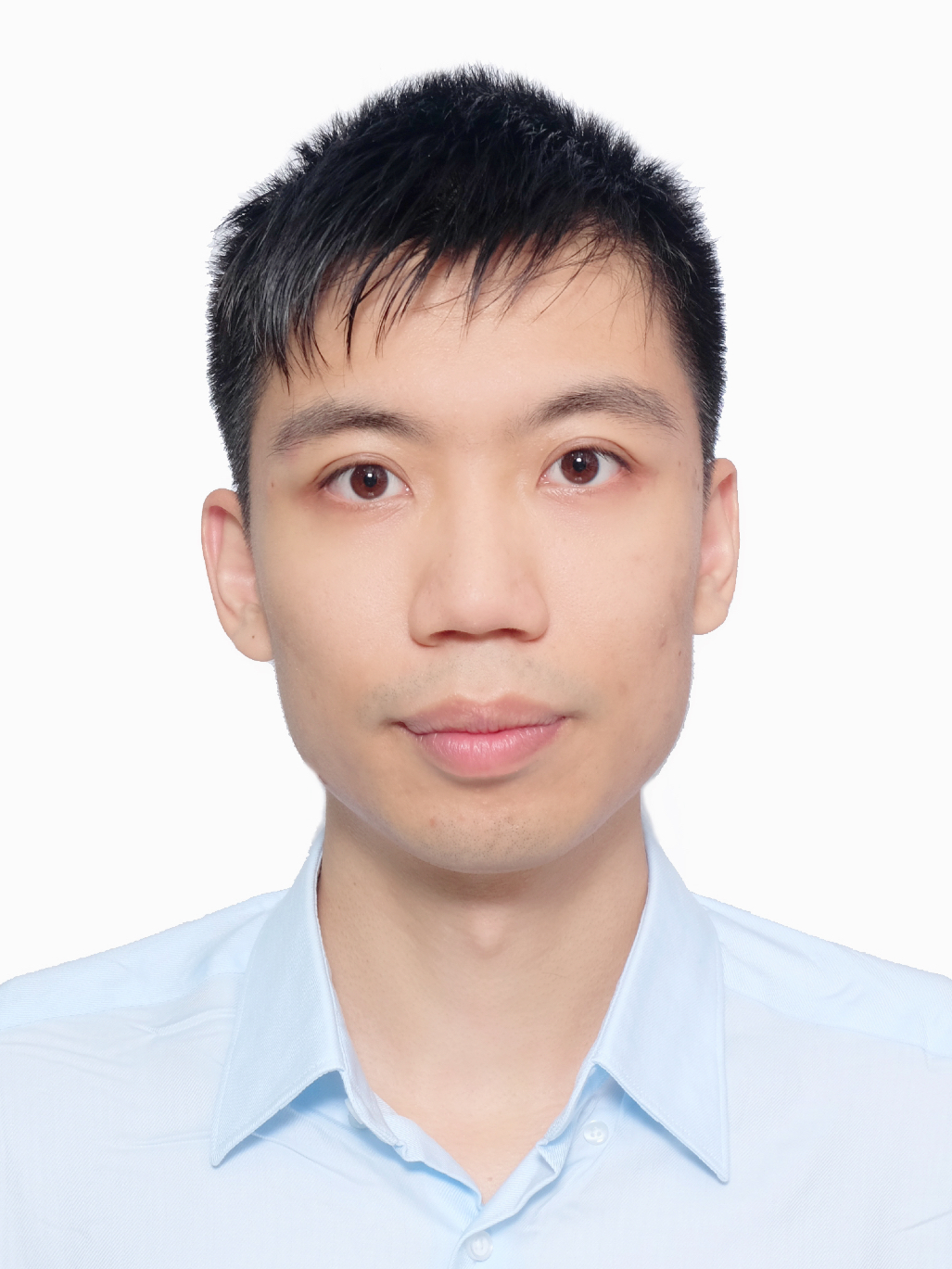}}]{Yuyi Mao}
(Member, IEEE) received the B.Eng. degree in information and communication engineering from Zhejiang University, Hangzhou, China, in 2013, and the Ph.D. degree in electronic and computer engineering from The Hong Kong University of Science and Technology, Hong Kong, in 2017. He was a Lead Engineer with the Hong Kong Applied Science and Technology Research Institute Co., Ltd., Hong Kong, and a Senior Researcher with the Theory Lab, 2012 Labs, Huawei Tech. Investment Co., Ltd., Hong Kong. He is currently a Research Assistant Professor with the Department of Electrical and Electronic, The Hong Kong Polytechnic University, Hong Kong. His research interests include wireless communications and networking, mobile-edge computing and learning, and wireless artificial intelligence.

He was the recipient of the 2021 IEEE Communications Society Best Survey Paper Award and the 2019 IEEE Communications Society and Information Theory Society Joint Paper Award. He was also recognized as an Exemplary Reviewer of the IEEE Wireless Communications Letters in 2021 and 2019 and the IEEE Transactions on Communications in 2020. He is an Associate Editor of the EURASIP Journal on Wireless Communications and Networking.
\end{IEEEbiography}

% \vspace{7cm}
\newpage

\begin{IEEEbiography}[{\includegraphics[width=1in,height=1.25in,clip,keepaspectratio]{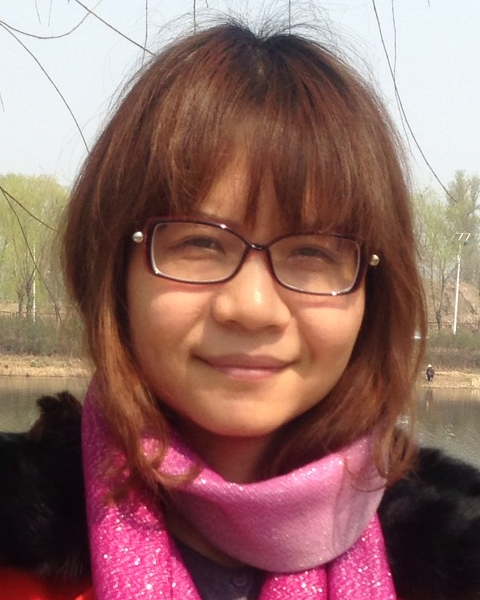}}]{Jessie Hui Wang}
(Member, IEEE) received the B.S. and M.S. degrees in computer science from Tsinghua University and the Ph.D. degree in information engineering from The Chinese University of Hong Kong in 2007. She is currently a Tenured Associate Professor with Tsinghua University. Her research interests include Internet routing, distributed computing, network measurement, and Internet economics.
\end{IEEEbiography}
\vskip -2\baselineskip plus -1fil

\begin{IEEEbiography}[{\includegraphics[width=1in,height=1.25in,clip,keepaspectratio]{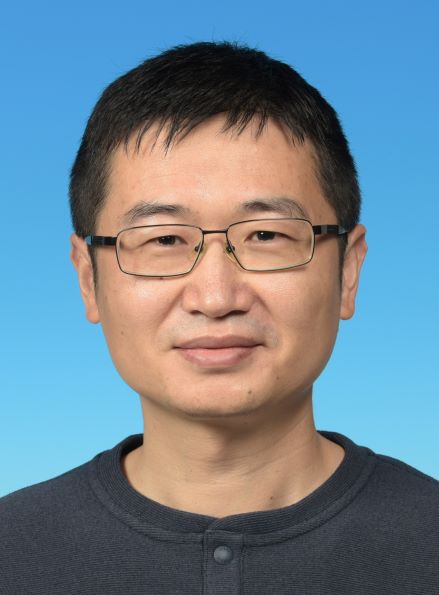}}]{Jun Zhang}

(Fellow, IEEE) received the B.Eng. degree in Electronic Engineering from the University of Science and Technology of China in 2004, the M.Phil. degree in Information Engineering from the Chinese University of Hong Kong in 2006, and the Ph.D. degree in Electrical and Computer Engineering from the University of Texas at Austin in 2009. He is an Associate Professor in the Department of Electronic and Computer Engineering at the Hong Kong University of Science and Technology. His research interests include wireless communications and networking, mobile edge computing and edge AI, and cooperative AI.

Dr. Zhang co-authored the book Fundamentals of LTE (Prentice-Hall, 2010). He is a co-recipient of several best paper awards, including the 2021 Best Survey Paper Award of the IEEE Communications Society, the 2019 IEEE Communications Society \& Information Theory Society Joint Paper Award, and the 2016 Marconi Prize Paper Award in Wireless Communications. Two papers he co-authored received the Young Author Best Paper Award of the IEEE Signal Processing Society in 2016 and 2018, respectively. He also received the 2016 IEEE ComSoc Asia-Pacific Best Young Researcher Award. He is an Editor of IEEE Transactions on Communications, IEEE Transactions on Machine Learning in Communications and Networking, and was an editor of IEEE Transactions on Wireless Communications (2015-2020). He served as a MAC track co-chair for IEEE Wireless Communications and Networking Conference (WCNC) 2011 and a co-chair for the Wireless Communications Symposium of IEEE International Conference on Communications (ICC) 2021. He is an IEEE Fellow and an IEEE ComSoc Distinguished Lecturer.
\end{IEEEbiography}

\clearpage
% Appendix
\appendices

\section{Proof of Proposition \ref{remark1}}
\label{proof_remark_1}
\begin{proof}
    {By expanding the cross-entropy loss of a real data sample $(\bm{x}_k, y_k)$ at client $k$, we have}
        \begin{align}
        &\min_{\bm{w}^e_k, \bm{w}^d_k} - \mathbb{E}_{(\bm{x}_k,y_k) \sim \mathcal{D}_k}  \mathbb{E}_{ \bm{z} \sim p(\bm{z}|\bm{x}_{k}; \bm{w}^e_k)} \log p({y}_k|\bm{z};\bm{w}^d_k) \\
         % = &\min_{\bm{w}^{e}_k, \bm{w}^d_k} - \mathbb{E}_{(\bm{x}_k,y_k) \sim \mathcal{D}_k}  \mathbb{E}_{ \bm{z} \sim p(\bm{z}|\bm{x}_{k}, \bm{w}^e_k)} [  \log p( \bm{z}|{y}_k,\bm{w}^d_k) \nonumber \\
         % &  + \log p({y}_k,\bm{w}^d_k ) - \log p(\bm{z} ,\bm{w}^d_k) ] \\
        % Since the classifier $\bm{w}_k^d$ is independent with label $y_k$ and the extracted features $\bm{z}  \sim p(\bm{z}|\bm{x}_{k}, \bm{w}^e_k)$, and $\log p(y_k)$ is a constant, we have
         % &\min_{\bm{w}^e_k, \bm{w}^d_k} - \mathbb{E}_{(\bm{x}_k,y_k) \sim \mathcal{D}_k}  \mathbb{E}_{ \bm{z} \sim p(\bm{z}|\bm{x}_{k}, \bm{w}^e_k)} \log p({y}_k|\bm{z},\bm{w}^d_k) \\
         \equiv &\min_{\bm{w}^{e}_k, \bm{w}^d_k} - \mathbb{E}_{(\bm{x}_k,y_k) \sim \mathcal{D}_k}  \mathbb{E}_{ \bm{z} \sim p(\bm{z}|\bm{x}_{k}; \bm{w}^e_k)} [  \log p( \bm{z}|{y}_k;\bm{w}^d_k) \nonumber \\
         &  + \log p({y}_k ) - \log p(\bm{z} ) ] \\
         \equiv &\min_{\bm{w}^e_k, \bm{w}^d_k} - \mathbb{E}_{(\bm{x}_k,y_k) \sim \mathcal{D}_k}  \mathbb{E}_{ \bm{z} \sim p(\bm{z}|\bm{x}_{k}, \bm{w}^e_k)}  [\log p(\bm{z}|{y}_k;\bm{w}^d_k) \nonumber \\
         & - \log p(\bm{z}|\bm{x}_k; \bm{w}_k^e)] \\
         % \equiv &\min_{\bm{w}^e_k, \bm{w}^d_k} - \mathbb{E}_{(\bm{x}_k,y_k) \sim \mathcal{D}_k}  \mathbb{E}_{ \bm{z} \sim p(\bm{z}|\bm{x}_{k}, \bm{w}^e_k)}  \log p(\bm{z}|{y}_k,\bm{w}^d_k) \nonumber \\
         % &- H (p(\bm{z}|\bm{x}_k,y_k, \bm{w}_k^e))  \\
         % &  - \log p(\bm{z}|\bm{x}_k, \bm{w}_k^e) ] \\
         \iffalse
         \equiv & \min_{w^e_k, w^2_k} - \mathbb{E}_{(x_k,y_k) \sim \mathcal{D}_k}  \mathbb{E}_{ z \sim p(z|{x}_{k}, w^e_k)} \log p(z|{y}_{k}^i,w^c_k) \\
         \equiv & \min_{w^e_k, w^c_k} - H \left(p(z|{x}_{k}^i, {y}_{k}^i, w^e_k) \right) -\mathbb{E}_{(x_k,y_k) \sim \mathcal{D}_k}  \mathbb{E}_{ z \sim p(z|{x}_{k}, w^e_k)} \log p(z|{y}_{k}^i,w^c_k) \\
         \fi
         = & \min_{\bm{w}^e_k, \bm{w}^d_k} D_{\text{KL}} \left( p(\bm{z}|\bm{x}_{k}; \bm{w}^e_k) || p(\bm{z}|{y}_{k};\bm{w}^d_k) \right),
    \end{align}
    where $ p(\bm{z}|{y}_{k}; \bm{w}^d_k)$ is defined as the probability that the intermediate feature inputting into the classifier $\bm{w}_k^d$ is $\bm{z}$ given that it yields a label ${y}_{k}$.
    \end{proof}

\section{Proof of Proposition \ref{remark2}}
\label{proof_remark_2}
\begin{proof}
    {By the cross-entropy loss of client $k$ with a synthetic data sample $(\hat{\bm{x}}_{k^\prime}, y_{k^\prime})$ at client $k$, we have}
        \begin{align}
        &\min_{\bm{w}^e_k, \bm{w}^d_k} - \mathbb{E}_{(\hat{\bm{x}}_{k^\prime},y_{k^\prime}) \sim \hat{\mathcal{D}}_{k^\prime, \text{syn}} } \mathbb{E}_{\bm{z} \sim p(\bm{z}|\hat{\bm{x}}_{k^\prime}; \bm{w}^e_k)} \log p({y}_{k^\prime}|\bm{z};\bm{w}^d_k) \\
         \equiv &\min_{\bm{w}^e_k, \bm{w}^d_k} - \mathbb{E}_{(\hat{\bm{x}}_{k^\prime},y_{k^\prime}) \sim \hat{\mathcal{D}}_{k^\prime, \text{syn}} } \mathbb{E}_{p(\bm{z}|\hat{\bm{x}}_{k^\prime}^i; \bm{w}^e_k)} [  \log p(\bm{z}|{y}_{k^\prime};\bm{w}^d_k) \nonumber \\
         & + \log p({y}_{k^\prime}) - \log p(\bm{z}) ] \\
         \equiv & \min_{\bm{w}^e_k, \bm{w}^d_k} - \mathbb{E}_{(\hat{\bm{x}}_{k^\prime},y_{k^\prime}) \sim \hat{\mathcal{D}}_{k^\prime, \text{syn}} } \mathbb{E}_{p(\bm{z}|\hat{\bm{x}}_{k^\prime}^i; \bm{w}^e_k)} [ \log p(\bm{z}|{y}_{k^\prime}^i;\bm{w}^d_k) \nonumber \\
         & - \log p(\bm{z}|\hat{\bm{x}}_{k^\prime}; \bm{w}_k^e)  ] \\
         \iffalse
         \equiv & \min_{w^e_k, w^c_k} - H \left(p(z|\hat{x}_{k^\prime}^i, w^e_k) \right) -\mathbb{E}_{p(z|\hat{x}_{k^\prime}^i, {y}_{k^\prime}^i, w^e_k)} \log p(z|{y}_{k^\prime}^i,w^c_k) \\
         \fi
         = & \min_{\bm{w}^e_k, \bm{w}^d_k} D_{\text{KL}} \left( p(\bm{z}|\hat{\bm{x}}_{k^\prime}^i; \bm{w}^e_k) || p(\bm{z}|{y}_{k^\prime}^i; \bm{w}^d_k) \right) \\
         \label{approx_1}\approx &\min_{\bm{w}^e_k,\bm{w}^d_k} D_{\text{KL}} \left( p(\bm{z}|\hat{\bm{x}}_{k^\prime}^i; \bm{w}^e_g) || p(\bm{z}|{y}_{k^\prime}^i, \bm{w}^d_k) \right) \\
         \label{approx_3}\approx &\min_{\bm{w}^e_k,\bm{w}^d_k} D_{\text{KL}} \left( p(\bm{z}|\Tilde{\bm{x}}_{k^\prime}^i; \bm{w}^e_g) || p(\bm{z}|{y}_{k^\prime}^i, \bm{w}^d_k) \right) \\
         \label{approx_2}\approx &\min_{\bm{w}^e_k,\bm{w}^d_k} D_{\text{KL}} \left( p(\bm{z}|\bm{x}_{k^\prime}^i; \bm{w}^e_g) || p(\bm{z}|{y}_{k^\prime}^i, \bm{w}^d_k) \right),
    \end{align}
    where $\Tilde{\bm{x}}_{k^\prime}^i$ is defined as the synthetic data learned by matching all the features of the real sample without CAM in \eqref{loss_fm}.
    The approximate equation \eqref{approx_1} gives rise to the global model download in each communication round, which becomes an equation when the local update step is set to 1 and the data synthesis is conducted every communication round.
    The approximate equation \eqref{approx_3} is obtained according to the fact that $\hat{\bm{x}}_{k^\prime}^i$ and $\Tilde{\bm{x}}_{k^\prime}^i$ have the same class-relevant features by introducing the CAM for better privacy preservation, and full feature matching data synthesis push the synthetic sample $\Tilde{\bm{x}}_{k^\prime}^i$ to match the feature of the real sample ${\bm{x}}_{k^\prime}^i$, which devotes to the approximate equation \eqref{approx_2}.
    % our proposed feature matching data synthesis objective function in \eqref{loss_fm} push the synthetic samples to match class-relevant features with the real samples.
    % our proposed feature matching data synthesis objective function in \eqref{loss_fm}.
    \end{proof}

\section{Proof of Theorem \ref{theorem_convergence}}

% Prior to the proof of Theorem \ref{theorem_convergence}, 
We first present two Lemmas that are useful to the proof of Theorem \ref{theorem_convergence}.
\label{proof_theorem1}
\begin{lemma}
\label{hypo_ineq}
    Given the hypothesis spaces $\hat{\mathcal{H}} \coloneqq \{ \hat{\bm{h}} : \mathcal{X} \rightarrow \mathcal{Y} \}$ and $\mathcal{G} \coloneqq \{ \bm{g} : \mathcal{X} \rightarrow \{0,1\}\}$ with $\bm{g}(\bm{x}) = \frac{1}{2} \| \hat{\bm{h}} (\bm{x}) - \hat{\bm{h}}^\prime (\bm{x}) \|_1$, for $\hat{\bm{h}}, \hat{\bm{h}}^\prime \in \hat{\mathcal{H}}$, we have
    \begin{align}
        |\mathcal{L}_\mathcal{D} (\hat{\bm{h}}, \hat{\bm{h}}^\prime) - \mathcal{L}_{\mathcal{D}^\prime }(\hat{\bm{h}}, \hat{\bm{h}}^\prime)  | \leq d_\mathcal{G} (\mathcal{D}, \mathcal{D}^\prime).
    \end{align}
    \begin{proof}
        \begin{align}
            & d_\mathcal{G} (\mathcal{D}, \mathcal{D}^\prime)  \\
            =& 2 \sup_{g\in \mathcal{G}} |  {\Pr}_{\mathcal{D}} [g(\bm{x}=1)] -  {\Pr} _{\mathcal{D}^\prime} [g(\bm{x}=1)]|  \\
            = & \sup_{\hat{\bm{h}},\hat{\bm{h}}^\prime \in \hat{\mathcal{H}} } 2 \left | \frac{1}{2} \mathbb{E}_{\bm{x} \in \mathcal{D}} [\hat{\bm{h}} (x) - \hat{\bm{h}}^\prime (x)] - \frac{1}{2} \mathbb{E}_{\bm{x} \in \mathcal{D}^\prime} [\hat{\bm{h}} (x) - \hat{\bm{h}}^\prime (x)] \right |  \\
            \geq& |\mathcal{L}_\mathcal{D} (\hat{\bm{h}}, \hat{\bm{h}}^\prime) - \mathcal{L}_{\mathcal{D}^\prime }(\hat{\bm{h}}, \hat{\bm{h}}^\prime)  |.
        \end{align}
    \end{proof}
\end{lemma}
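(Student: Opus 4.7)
The plan is to identify each hypothesis pair $(\hat{\bm{h}}, \hat{\bm{h}}^\prime) \in \hat{\mathcal{H}} \times \hat{\mathcal{H}}$ with the indicator function $\bm{g}(\bm{x}) = \tfrac{1}{2}\|\hat{\bm{h}}(\bm{x}) - \hat{\bm{h}}^\prime(\bm{x})\|_1$ that the definition of $\mathcal{G}$ attaches to it, and then observe that the disagreement loss on a distribution is exactly the probability that this induced $\bm{g}$ evaluates to $1$ under that distribution. Once this identification is made, the inequality reduces to the trivial fact that a particular value does not exceed the supremum taken over its class, which is essentially the content of $d_\mathcal{G}$.

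First I would verify that the map $\bm{g} : \mathcal{X} \to \{0,1\}$ is well-defined and lies in $\mathcal{G}$. Since $\hat{\bm{h}}(\bm{x}), \hat{\bm{h}}^\prime(\bm{x})$ are one-hot vectors in $\mathcal{V}$, their difference has $\ell_1$-norm $0$ when the two hypotheses agree on $\bm{x}$ and $\ell_1$-norm $2$ otherwise, so the factor $\tfrac{1}{2}$ normalises $\bm{g}$ to take values in $\{0,1\}$. This establishes two representations that drive the proof:
\begin{equation*}
\mathcal{L}_\mathcal{D}(\hat{\bm{h}}, \hat{\bm{h}}^\prime) \;=\; \mathbb{E}_{\bm{x} \sim \mathcal{D}}\bigl[\tfrac{1}{2}\|\hat{\bm{h}}(\bm{x}) - \hat{\bm{h}}^\prime(\bm{x})\|_1\bigr] \;=\; \mathbb{E}_{\bm{x}\sim\mathcal{D}}[\bm{g}(\bm{x})] \;=\; \Pr\nolimits_{\mathcal{D}}[\bm{g}(\bm{x}) = 1],
\end{equation*}
and the analogous identity on $\mathcal{D}^\prime$. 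Both identities rely only on the one-hot structure of the outputs.

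Next I would subtract the two representations and take absolute values, giving
\begin{equation*}
\bigl|\mathcal{L}_\mathcal{D}(\hat{\bm{h}}, \hat{\bm{h}}^\prime) - \mathcal{L}_{\mathcal{D}^\prime}(\hat{\bm{h}}, \hat{\bm{h}}^\prime)\bigr| \;=\; \bigl|\Pr\nolimits_{\mathcal{D}}[\bm{g}(\bm{x}) = 1] - \Pr\nolimits_{\mathcal{D}^\prime}[\bm{g}(\bm{x}) = 1]\bigr|.
\end{equation*}
Because this is the gap for one specific member of $\mathcal{G}$, it is bounded by the supremum of such gaps over $\mathcal{G}$, and the definition $d_{\mathcal{G}}(\mathcal{D}, \mathcal{D}^\prime) = 2 \sup_{\bm{g} \in \mathcal{G}} |\Pr_{\mathcal{D}}[\bm{g}(\bm{x})=1] - \Pr_{\mathcal{D}^\prime}[\bm{g}(\bm{x})=1]|$ then yields $|\mathcal{L}_\mathcal{D} - \mathcal{L}_{\mathcal{D}^\prime}| \leq \tfrac{1}{2} d_\mathcal{G}(\mathcal{D}, \mathcal{D}^\prime) \leq d_\mathcal{G}(\mathcal{D}, \mathcal{D}^\prime)$, which is the claimed inequality.

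There is no serious obstacle: the argument is essentially a definitional unpacking. The only subtlety is the mild slack coming from the factor of $2$ in the definition of $d_\mathcal{G}$; the tighter bound $\tfrac{1}{2} d_\mathcal{G}$ is what naturally falls out, and the statement in the lemma is a weaker consequence. I would also remark that the rewriting in the paper's proof, which pushes the supremum over $\bm{g}$ into a supremum over pairs $(\hat{\bm{h}}, \hat{\bm{h}}^\prime) \in \hat{\mathcal{H}}^2$, is valid precisely because the construction of $\mathcal{G}$ makes the map from hypothesis pairs to indicators surjective onto $\mathcal{G}$, so the two suprema range over the same set of values.
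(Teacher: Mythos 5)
Your proof is correct and follows essentially the same route as the paper's: identify the pair $(\hat{\bm{h}},\hat{\bm{h}}^\prime)$ with its induced $\bm{g}\in\mathcal{G}$, rewrite the disagreement loss as the probability $\Pr[\bm{g}(\bm{x})=1]$, and bound this particular gap by the supremum defining $d_\mathcal{G}$. The only cosmetic difference concerns normalization: the paper implicitly takes $\mathcal{L}_\mathcal{D}(\hat{\bm{h}},\hat{\bm{h}}^\prime)=\mathbb{E}_{\bm{x}\sim\mathcal{D}}\bigl[\|\hat{\bm{h}}(\bm{x})-\hat{\bm{h}}^\prime(\bm{x})\|_1\bigr]$ without the factor $\tfrac{1}{2}$ (as its proof of the subsequent Hoeffding lemma confirms), so the factor $2$ in $d_\mathcal{G}$ is absorbed exactly and there is no slack, whereas under your convention the tighter bound $\tfrac{1}{2}d_\mathcal{G}$ falls out first and is then relaxed --- either way the stated inequality holds.
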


\begin{lemma}
\label{lemma2}
    For any $\delta \in (0,1)$, with probability at least $1 - \frac{\delta}{K}$, we have
    \begin{align}
        & \mathcal{L}_{{\mathcal{D}}_k \cup {\mathcal{D}}_{\text{syn}} } (\hat{\bm{h}}_k) \\ 
        \leq & \mathcal{L}_{\hat{\mathcal{D}}_k \cup \hat{\mathcal{D}}_{\text{syn}} } (\hat{\bm{h}}_k) +  \sqrt{-\frac{1}{2}  (\frac{\alpha^2}{m_k} + \frac{(1-\alpha)^2}{m_{\text{syn}}} ) \log \frac{\delta}{2K}}.
        \label{28}
    \end{align}
\end{lemma}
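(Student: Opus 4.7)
The plan is to recognise Lemma \ref{lemma2} as a textbook one-sided Hoeffding bound applied to the mixture empirical loss, viewed as a weighted sum of independent bounded random variables.

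First, I would unpack the mixture induced by the local objective \eqref{local update}. Writing
\begin{align}
\mathcal{L}_{\hat{\mathcal{D}}_k \cup \hat{\mathcal{D}}_{\text{syn}}}(\hat{\bm{h}}_k) = \frac{\alpha}{m_k}\sum_{i=1}^{m_k}\ell(\hat{\bm{h}}_k;\bm{x}_i) + \frac{1-\alpha}{m_{\text{syn}}}\sum_{j=1}^{m_{\text{syn}}}\ell(\hat{\bm{h}}_k;\hat{\bm{x}}_j),
\end{align}
where $(\bm{x}_i,y_i)$ are i.i.d.\ from $\mathcal{D}_k$ and $(\hat{\bm{x}}_j,y_j)$ are i.i.d.\ from $\mathcal{D}_{\text{syn}}$, independently of the local sample, linearity of expectation shows that the population counterpart $\mathcal{L}_{\mathcal{D}_k\cup\mathcal{D}_{\text{syn}}}(\hat{\bm{h}}_k) = \alpha\,\mathcal{L}_{\mathcal{D}_k}(\hat{\bm{h}}_k) + (1-\alpha)\,\mathcal{L}_{\mathcal{D}_{\text{syn}}}(\hat{\bm{h}}_k)$ is exactly $\mathbb{E}\bigl[\mathcal{L}_{\hat{\mathcal{D}}_k\cup\hat{\mathcal{D}}_{\text{syn}}}(\hat{\bm{h}}_k)\bigr]$ when $\hat{\bm{h}}_k$ is treated as fixed.

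Second, I would apply Hoeffding's inequality to the weighted sum
\begin{align}
S \;:=\; \sum_{i=1}^{m_k}\tfrac{\alpha}{m_k}\,\ell(\hat{\bm{h}}_k;\bm{x}_i) + \sum_{j=1}^{m_{\text{syn}}}\tfrac{1-\alpha}{m_{\text{syn}}}\,\ell(\hat{\bm{h}}_k;\hat{\bm{x}}_j).
\end{align}
Because the hypotheses in $\hat{\mathcal{H}}_k$ produce one-hot outputs (Definition 1), each loss term lies in $[0,1]$, so each summand has range at most $1$. The sum of squared coefficients is
\begin{align}
m_k\left(\tfrac{\alpha}{m_k}\right)^2 + m_{\text{syn}}\left(\tfrac{1-\alpha}{m_{\text{syn}}}\right)^2 = \tfrac{\alpha^2}{m_k}+\tfrac{(1-\alpha)^2}{m_{\text{syn}}},
\end{align}
so Hoeffding gives $\Pr\bigl(|S-\mathbb{E}[S]|\geq t\bigr) \leq 2\exp\!\bigl(-2t^2/(\tfrac{\alpha^2}{m_k}+\tfrac{(1-\alpha)^2}{m_{\text{syn}}})\bigr)$.

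Third, I would set the right-hand side equal to $\delta/K$ and invert for $t$, obtaining
\begin{align}
t = \sqrt{-\tfrac{1}{2}\left(\tfrac{\alpha^2}{m_k}+\tfrac{(1-\alpha)^2}{m_{\text{syn}}}\right)\log\tfrac{\delta}{2K}},
\end{align}
and keeping only the upper one-sided tail yields \eqref{28} with probability at least $1-\delta/K$. The $1/K$ slack is precisely what will be consumed by the union bound over $k\in[K]$ later in the proof of Theorem \ref{theorem_convergence}.

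No step here is genuinely hard: this is a direct instantiation of Hoeffding's inequality. The only subtlety worth flagging is that $\hat{\bm{h}}_k$ must be regarded as fixed (or, equivalently, the argument must be upgraded to uniform convergence over $\hat{\mathcal{H}}_k$ if $\hat{\bm{h}}_k$ depends on the samples); under the paper's convention that losses are bounded in $[0,1]$ via the one-hot hypothesis class, the concentration step is immediate.
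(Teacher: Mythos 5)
Your proof is correct and follows essentially the same route as the paper's: both view the mixture empirical loss as a weighted sum of independent bounded terms, apply the two-sided Hoeffding inequality with sum of squared coefficient ranges equal to $\frac{\alpha^2}{m_k}+\frac{(1-\alpha)^2}{m_{\text{syn}}}$, and invert $2\exp(-2\epsilon^2/(\cdot))=\delta/K$ to get the stated deviation term (the paper merely inserts an extra $(m_k+m_{\text{syn}})$ normalization that cancels). Your remark that $\hat{\bm{h}}_k$ must be treated as fixed is a fair caveat that the paper's own proof also leaves implicit.
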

\begin{proof}
    \begin{align}
    &\mathcal{L}_{\hat{\mathcal{D}}_k \cup \hat{\mathcal{D}}_{\text{syn}} } (\hat{\bm{h}}_k) = \alpha \mathcal{L}_{\hat{\mathcal{D}}_k} (\hat{\bm{h}}_k) + (1-\alpha) \mathcal{L}_{\hat{\mathcal{D}}_{\text{syn}}} (\hat{\bm{h}}_k) \\
    =& \frac{1}{m_k + m_{\text{syn}}} \big[\sum_{\bm{x} \in \hat{D}_k} \frac{\alpha (m_k+m_{\text{syn}})}{m_k} \| 
    \hat{\bm{h}}_k (\bm{x}) - \hat{\bm{h}}^* (\bm{x}) \|_1  \nonumber \\
     & \quad+ \sum_{\bm{x} \in \hat{D}_{\text{syn}}} \frac{ (1-\alpha) (m_k+m_{\text{syn}})}{m_{\text{syn}}} \|  \hat{\bm{h}}_k (\bm{x}) - \hat{\bm{h}}^* (\bm{x}) \|_1 \big].
    \end{align}
    Let $X_1^{(k)}, \dots,  X_{m_k}^{(k)}$ and $X_1^{({\text{syn}})}, \dots,  X_{m_k}^{({\text{syn}})}$ be the random variables that take on the values $ \frac{ \alpha (m_k+m_{\text{syn}})}{m_k} \|  \hat{\bm{h}}_k (\bm{x}) - \hat{\bm{h}}^* (\bm{x}) \|$ with $\bm{x} \in \hat{\mathcal{D}}_k$ and $ \frac{ (1-\alpha) (m_k+m_{\text{syn}})}{m_{\text{syn}}} \|  \hat{\bm{h}}_k (\bm{x}) - \hat{\bm{h}}^* (\bm{x}) \|$ with $\bm{x} \in \hat{\mathcal{D}}_{\text{syn}}$, respectively. Please note that the range of $X_1^{(k)}, \dots,  X_{m_k}^{(k)}$ and $X_1^{({\text{syn}})}, \dots,  X_{m_k}^{({\text{syn}})}$ are $[0, \frac{\alpha (m_k + m_{\text{syn}})}{m_k}]$ and $[0, \frac{(1-\alpha) (m_k + m_{\text{syn}})}{m_{\text{syn}}}]$, respectively.

    We represent $\Bar{X}=\mathbb{E}[X]$, $X \in \{ X_1^{(k)}, \dots,  X_{m_k}^{(k)},  X_1^{({\text{syn}})}, \dots,  X_{m_k}^{({\text{syn}})}\}$, which is the empirical loss (i.e., first term) in inequality \eqref{28}.
    Due to the linearity of expectations, we obtain $\mathbb{E}[(\Bar{X})]= \mathcal{L}_{\mathcal{D}_k \cup \mathcal{D}_{\text{syn}}} (\hat{h}_k)$. Then, we employ the Hoeffding’s inequality to show:
    \begin{align}
        &\Pr \big( | \Bar{X} - \mathbb{E} [\Bar{X}] \ge \epsilon | \big) \\
        \leq & 2 \exp \left( - \frac{2 (m_k+ m_{\text{syn}})^2 \epsilon^2}{m_k \left(\frac{\alpha (m_k + m_{\text{syn}})}{m_k} \right)^2 + m_{\text{syn}} \left(\frac{(1-\alpha) (m_k+m_{\text{syn}})}{m_{\text{syn}}} \right)^2 } \right) \\
        =& 2 \exp \left( - \frac{2 \epsilon^2}{ \frac{\alpha^2}{m_k} + \frac{(1-\alpha)^2}{ m_{\text{syn}} } } \right).
    \end{align}
    By defining $\frac{\delta}{K} = 2 \exp \left( - \frac{2 \epsilon^2}{ \frac{\alpha^2}{m_k} + \frac{(1-\alpha)^2}{ m_{\text{syn}} } } \right)$, we have
    \begin{align}
        \epsilon = \sqrt{-\frac{1}{2}  (\frac{\alpha^2}{m_k} + \frac{(1-\alpha)^2}{m_{\text{syn}}} ) \log \frac{\delta}{2K}}.
    \end{align}
\end{proof}

Now we are ready to prove Theorem \ref{theorem_convergence} with the above lemmas.
\begin{proof}
We prove the upper bound for $|\mathcal{L}_{\hat{\mathcal{D}}_g} (\hat{\bm{h}}, \hat{\bm{h}}^*) - \mathcal{L}_{\mathcal{D}_k \cup \mathcal{D}_{\text{syn}}} (\hat{\bm{h}})|$ as follows:
    \begin{align}
        &|\mathcal{L}_{\hat{\mathcal{D}}_g} (\hat{\bm{h}}, \hat{\bm{h}}^*) - \mathcal{L}_{\mathcal{D}_k \cup \mathcal{D}_{\text{syn}}} (\hat{\bm{h}})| \nonumber \\
        = &|\mathcal{L}_{\hat{\mathcal{D}}_g} (\hat{\bm{h}}, \hat{\bm{h}}^*) - \alpha \mathcal{L}_{\mathcal{D}_k} (\hat{\bm{h}}, \hat{\bm{h}}^*) - (1-\alpha) \mathcal{L}_{\mathcal{D}_{\text{syn}}} (\hat{\bm{h}}, \hat{\bm{h}}^*) | \\
        \label{ineq12} \leq &\alpha | \mathcal{L}_{\hat{\mathcal{D}}_g} (\hat{\bm{h}}, \hat{\bm{h}}^*) - \mathcal{L}_{\mathcal{D}_k} (\hat{\bm{h}}, \hat{\bm{h}}^*) | \nonumber \\
        & + (1-\alpha) | \mathcal{L}_{\hat{\mathcal{D}}_g} (\hat{\bm{h}}, \hat{\bm{h}}^*) - \mathcal{L}_{\mathcal{D}_{\text{syn}}} (\hat{\bm{h}}, \hat{\bm{h}}^*) | \\ 
        \label{ineq13} \leq &\alpha | \mathcal{L}_{\hat{\mathcal{D}}_g} (\hat{\bm{h}}^\prime, \hat{\bm{h}}^*) | + \alpha | \mathcal{L}_{\hat{\mathcal{D}}_g} (\hat{\bm{h}}, \hat{\bm{h}}^\prime) - \mathcal{L}_{\mathcal{D}_k} (\hat{\bm{h}}, \hat{\bm{h}}^\prime) | \nonumber \\
         & +  \alpha| \mathcal{L}_{\mathcal{D}_k} (\hat{\bm{h}}^\prime, \hat{\bm{h}}^*) | 
        +  (1-\alpha) | \mathcal{L}_{\hat{\mathcal{D}}_g} (\hat{\bm{h}}^\prime, \hat{\bm{h}}^*)  | \nonumber \\
        &+  (1-\alpha) | \mathcal{L}_{\hat{\mathcal{D}}_g} (\hat{\bm{h}}, \hat{\bm{h}}^\prime) - \mathcal{L}_{\mathcal{D}_{\text{syn}}} (\hat{\bm{h}}, \hat{\bm{h}}^\prime) | \nonumber \\
        & 
        + (1-\alpha) | \mathcal{L}_{\mathcal{D}_{\text{syn}}} (\hat{\bm{h}}^\prime, \hat{\bm{h}}^*) | \\ 
        \label{ineq14} \leq &\alpha [\lambda_k + d_{\mathcal{G}_k} (\mathcal{D}_k, \hat{\mathcal{D}}_g) ] + (1-\alpha) [\lambda_{k,{\text{syn}}} + d_{\mathcal{G}_k} (\mathcal{D}_{\text{syn}}, \hat{\mathcal{D}}_g) ].
    \end{align}
    The inequalities \eqref{ineq12} \eqref{ineq13} follow from the triangle inequality, and the inequality \eqref{ineq14} is obtained from Lemma \ref{hypo_ineq}.

    Additionally, due to the convexity of the risk function and the Jensen's inequality, we obtain
    \begin{align}
        \mathcal{L}_{\hat{\mathcal{D}}_{g}} (\hat{\bm{h}}) \leq \frac{1}{K}\sum_{k=1}^K  \mathcal{L}_{\hat{\mathcal{D}}_{g}} (\hat{\bm{h}}_k, \hat{\bm{h}}^*).
    \end{align}
    Therefore,
    \begin{align}
        &\Pr \big[ \mathcal{L}_{\hat{\mathcal{D}}_{g}} (\hat{\bm{h}}) \nonumber \\
        \geq &\frac{1}{K} \sum_{k=1}^K  [ \mathcal{L}_{\hat{\mathcal{D}}_k \cup \hat{\mathcal{D}}_{\text{syn}} } (\hat{\bm{h}}_k) + \sqrt{-\frac{1}{2}  (\frac{\alpha^2}{m_k} 
        +\frac{(1-\alpha)^2}{m_{\text{syn}}} ) \log \frac{\delta}{2}}
        \nonumber \\
        +& \alpha [ \lambda_k + d_{\mathcal{G}_k} (\mathcal{D}_k, \hat{\mathcal{D}}_g) ]
        + (1-\alpha) [\lambda_{k,\text{syn}} + d_{\mathcal{G}_k} (\mathcal{D}_{\text{syn}}, \hat{\mathcal{D}}_g) ] ] \big] \\
        \leq & \Pr \big[ \frac{1}{K}\sum_{k=1}^K  \mathcal{L}_{\hat{\mathcal{D}}_{g}} (\hat{\bm{h}}_k, \hat{\bm{h}}^*) \geq \frac{1}{K} \sum_{k=1}^K  [ \mathcal{L}_{\hat{\mathcal{D}}_k \cup \hat{\mathcal{D}}_{\text{syn}} } (\hat{\bm{h}}_k) \nonumber \\
        + & \sqrt{-\frac{1}{2}  (\frac{\alpha^2}{m_k} + \frac{(1-\alpha)^2}{m_{\text{syn}}} ) \log \frac{\delta}{2}}
        + \alpha [ \lambda_k + d_{\mathcal{G}_k} (\mathcal{D}_k, \hat{\mathcal{D}}_g) ] \nonumber \\
        & \quad\quad\quad\quad\quad\quad+ (1-\alpha) [\lambda_{k,{\text{syn}}} + d_{\mathcal{G}_k} (\mathcal{D}_{\text{syn}}, \hat{\mathcal{D}}_g) ] ] \big] \\
        \leq & \Pr \big[ \bigcup_{k\in [K]} \mathcal{L}_{\hat{\mathcal{D}}_{g}} (\hat{\bm{h}}_k, \hat{\bm{h}}^*) \geq  [ \mathcal{L}_{\hat{\mathcal{D}}_k \cup \hat{\mathcal{D}}_{\text{syn}} } (\hat{\bm{h}}_k) \nonumber \\
        + & \sqrt{-\frac{1}{2}  (\frac{\alpha^2}{m_k} + \frac{(1-\alpha)^2}{m_{\text{syn}}} ) \log \frac{\delta}{2}}
        + \alpha [ \lambda_k + d_{\mathcal{G}_k} (\mathcal{D}_k, \hat{\mathcal{D}}_g) ] \nonumber \\
        & \quad \quad \quad \quad\quad\quad+ (1-\alpha) [\lambda_{k,{\text{syn}}} + d_{\mathcal{G}_k} (\mathcal{D}_{\text{syn}}, \hat{\mathcal{D}}_g) ] ] \big].
    \end{align}
    According to Lemma \ref{lemma2} and Boole's inequality, we obtain
    \begin{align}
       &\Pr \big[ \mathcal{L}_{\hat{\mathcal{D}}_{g}} (\hat{\bm{h}}) \\
       \geq & \frac{1}{K} \sum_{k=1}^K  [ \mathcal{L}_{\hat{\mathcal{D}}_k \cup \hat{\mathcal{D}}_{\text{syn}} } (\hat{\bm{h}}_k) \nonumber \\
       + & \sqrt{-\frac{1}{2}  (\frac{\alpha^2}{m_k} 
        +\frac{(1-\alpha)^2}{m_{\text{syn}}} ) \log \frac{\delta}{2}}+ \alpha [ \lambda_k + d_{\mathcal{G}_k} (\mathcal{D}_k, \hat{\mathcal{D}}_g) ] \nonumber \\
        +& (1-\alpha) [\lambda_{k,{\text{syn}}} + d_{\mathcal{G}_k} (\mathcal{D}_{\text{syn}}, \hat{\mathcal{D}}_g) ] ] \big] \\
        \leq & \sum_{k \in [K]} \frac{\delta}{K} = \delta.
    \end{align}
    % By combining \eqref{ineq14} and Lemma \ref{lemma2}, we can prove Theorem \ref{theorem_convergence}.
\end{proof}

%\end{thebibliography}
\end{document}